\newtheorem{theorem}{Theorem}
\newtheorem{lemma}{Lemma}
\newtheorem{corollary}{Corollary}[theorem]
\newtheorem{definition}{Definition}
\newtheorem{remark}{Remark}
\DeclareMathOperator*{\argmin}{argmin}
\newcommand{\hZ}{{\hat{Z}}}
\newcommand{\bE}{\mathbb{E}}
\newcommand{\bR}{\mathbb{R}}
\newcommand{\cL}{\mathcal{L}}
\newcommand{\cZ}{\mathcal{Z}}
\newcommand{\cM}{\mathcal{M}}
\newcommand{\bphi}{\boldsymbol{\phi}}
\newcommand{\btheta}{\boldsymbol{\theta}}
\begin{document}

\date{}


\title{\Large \bf Privacy for Fairness: Information Obfuscation for Fair Representation Learning with Local Differential Privacy }

\author{
{\rm Songjie Xie}\\
HKUST
\and
{\rm Youlong Wu\footnotemark[1]}\\
ShanghaiTech University
\and
{\rm Jiaxuan Li}\\
ShanghaiTech University
\and
{\rm Ming Ding}\\
Data61, CSIRO
\and
{\rm Khaled~B.~Letaief\footnotemark[1]}\\
HKUST
}
\maketitle
\def\thefootnote{*}\footnotetext{Corresponding authors}
\thispagestyle{empty}

\subsection*{Abstract}
As machine learning (ML) becomes more prevalent in human-centric applications, there is a growing emphasis on algorithmic fairness and privacy protection. While previous research has explored these areas as separate objectives, there is a growing recognition of the complex relationship between privacy and fairness. However, previous works have primarily focused on examining the interplay between privacy and fairness through empirical investigations, with limited attention given to theoretical exploration. This study aims to bridge this gap by introducing a theoretical framework that enables a comprehensive examination of their interrelation. We shall develop and analyze an information bottleneck (IB) based information obfuscation method with local differential privacy (LDP) for fair representation learning. In contrast to many empirical studies on fairness in ML, we show that the incorporation of LDP randomizers during the encoding process can enhance the fairness of the learned representation. Our analysis will demonstrate that the disclosure of sensitive information is constrained by the privacy budget of the LDP randomizer, thereby enabling the optimization process within the IB framework to effectively suppress sensitive information while preserving the desired utility through obfuscation. Based on the proposed method, we further develop a variational representation encoding approach that simultaneously achieves fairness and LDP. Our variational encoding approach offers practical advantages. It is trained using a non-adversarial method and does not require the introduction of any variational prior. Extensive experiments will be presented to validate our theoretical results and demonstrate the ability of our proposed approach to achieve both LDP and fairness while preserving adequate utility.

\section{Introduction}


With the increasing accessibility of computational resources and vast datasets, machine learning (ML) has become a crucial framework for numerous human-centric applications. However, beyond improving the accuracy of ML algorithms, their successful deployment in real-world scenarios necessitates the establishment of reliable and trustworthy ML systems, which entails meeting additional criteria beyond performance alone. One critical criterion is the mitigation of biases against socially sensitive groups, including gender, race, or sexual orientation. This awareness of algorithmic fairness is growing in prominence as a result~\cite{rosenberg2023fairness, feldman2015certifying, kleinberg2018algorithmic, mehrabi2021survey}.
Besides, the utilization of personal data in ML models has raised concerns regarding privacy breaches, necessitating the need for privacy protection as a prerequisite before deployment~\cite{dwork2011firm}. Fairness and privacy represent two pivotal ethical considerations in ML, which are not only integral from an ethical standpoint but are also mandated by contemporary and forthcoming governmental regulations such as the General Data Protection and Regulation (GDPR)~\cite{voigt2017eu} and the AI Act~\cite{veale2021demystifying}.

Fairness in ML encompasses a wide spectrum of research endeavors that aim at mitigating biases and discriminatory outcomes in algorithmic decision-making systems. It can be broadly classified into two main categories: Individual fairness and group fairness\footnote{The concept of subgroup fairness extends the notion of group fairness by considering fairness within subgroups of a population.}~\cite{mehrabi2021survey}. Individual fairness seeks to avoid arbitrary differentiations between practically indistinguishable individuals. Group fairness, as a widely employed concept, centers around the objective of preventing decisions from unfairly favoring or disadvantaging specific groups based on sensitive attributes. Recently, numerous endeavors have been made to achieve algorithmic fairness, including in-processing methods that incorporate fairness constraints~\cite{kamishima2011fairness, zafar2017fairness}, as well as pre/post-processing methods that modify the representations of training data and/or address potentially unfair predictions~\cite{dwork2012fairness,alghamdi2022beyond, calmon2017optimized, louizos2015variational, zhao2019conditional,madras2018learning, bertran2019adversarially}.
On the other hand, differential privacy has emerged as a widely recognized privacy framework for safeguarding the opt-out rights of individuals~\cite{dwork2006calibrating, dwork2011firm}. In essence, differential privacy ensures that attackers cannot exploit the results of queries or released models to discern the presence or absence of specific records in the underlying dataset.

Across various scenarios, fairness and privacy are deeply intertwined and cannot be viewed as independent objectives in ML.
Seminal studies have demonstrated the inherent connection between certain notions of fairness and privacy. For example, group fairness is equivalent to privatizing group membership attributes~\cite{mozannar2020fair}, while individual fairness aligns with differential privacy~\cite{dwork2012fairness}.
Moreover, recent empirical studies have shown that the direct implementation of private mechanisms can result in disparate performance across different demographic groups~\cite{fioretto2022differential, bagdasaryan2019differential, pujol2020fair, farrand2020neither}, which impairs fairness. For instance, models trained with DP-SGD~\cite{abadi2016deep} exhibit a larger accuracy reduction on unprivileged subgroups~\cite{bagdasaryan2019differential}, and privacy-preserving data can be used to infer sensitive attributes and introduce unintended biases into decision-making~\cite{pujol2020fair}.
However, despite the existing empirical studies that have imposed the contrasting goals of privacy and fairness, there remains a lack of theoretical analysis examining their interrelation. 
Consequently, the development of fairness-aware learning algorithms that effectively achieve privacy preservation remains largely unexplored in the field of trustworthy ML, especially in the context of preprocessing methods for fairness and differential privacy.

\begin{figure}[t]
		\centering
		\includegraphics[width=0.98\linewidth]{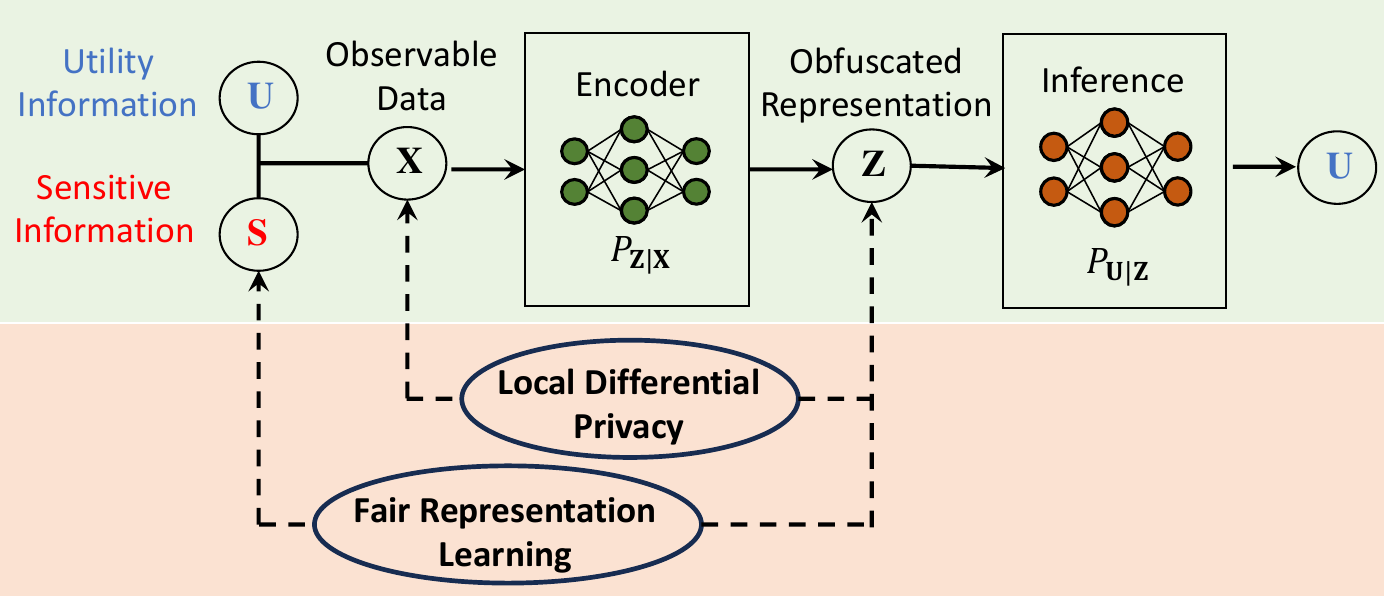}
		\caption{ The considered scenario of fair representation learning for information obfuscation with local differential privacy.}
		\label{fig:illu-model}
	\end{figure}

To address \textcolor{blue}{this} challenge, we adopt a theoretical approach to characterize the interrelation, particularly focusing on fair representation learning with local differential privacy (LDP), as illustrated in Figure \ref{fig:illu-model}. Fair representation learning~\cite{zemel2013learning} is a preprocessing fair ML approach that encodes the data for utility purposes while simultaneously obfuscating any information about membership of sensitive groups to prevent potential discrimination in unknown downstream tasks. By incorporating LDP~\cite{cormode2018privacy} into the encoding process, we can ensure that the representations released from the data owner are indistinguishable, thereby guaranteeing the privacy of the raw data. Our research addresses the problem of creating a trustworthy pre-processing step for various practical ML applications, such as making hiring decisions, approving loans, and analyzing health data.

In this work, we utilize an information bottleneck framework (IB, \cite{tishby2000information, du2012privacy, fischer2020conditional, rodriguez2021variational, bertran2019adversarially}) where mutual information (MI) acts as both a measure of utility and obfuscation. Unlike existing IB-based methods that rely on adversarial techniques to directly reduce the MI between sensitive variables and representations, we leverage the LDP mechanism to limit sensitive information leakage. By doing so, we convert the MI minimization problem into a more manageable maximization problem. Our approach is non-adversarial and is implemented using variational techniques, without requiring a variational prior.
 
Our contributions are summarized as follows: 
\begin{enumerate}
    \item To the best of our knowledge, \textcolor{blue}{this} is the first work to introduce LDP into fair representation learning. We propose a novel IB-based framework with LDP guarantee for sensitive information obfuscation. Based on the proposed IB-based framework, we propose a variational fair representation encoding approach that is non-adversarial and does not require a variational prior.
    \item We provide a comprehensive theoretical analysis of the relationship between fairness and privacy. Our analysis demonstrates that the use of LDP randomizers can enhance fairness within the proposed Information Bottleneck (IB) framework. Specifically, the privacy budget associated with LDP randomizers sets an upper limit on information leakage, thereby enabling the optimization process within the IB framework to effectively mitigate the disclosure of sensitive information.
    \item We conduct experiments on real-world datasets to validate our theoretical results and demonstrate that the proposed variational approach achieves LDP and fairness while maintaining sufficient utility.
\end{enumerate}

The rest of the paper is organized as follows. We formally formulate the problem and present the preliminaries in Section~\ref{sec:problem}. In Section~\ref{sec:method-1}, we present the proposed method of information obfuscation with LDP and provide the corresponding theoretical analysis, as well as the main result. Section~\ref{sec: variational} proposes a variational encoding method. Section~\ref{sec: exp} will present extensive numerical results, which will validate our main results and the performance of the proposed variational encoding method in Section~\ref{sec: exp} followed by discussions and conclusion in Section~\ref{sec: dis} and Section~\ref{sec: con}, respectively.

\section{Problem Formulation and Preliminaries}
\label{sec:problem}

\paragraph{Notations.} \emph{Throughout this paper, we denote random variables by capital letters (e.g., $X$) and their samples by lowercase letters (e.g., $x$). The distribution of a random variable $X$ is denoted by $P_X$ and its probability density (or mass) function by $p_X(x)$. We may drop the capital letter when it is clear from the context (e.g., $p_X(x) = p(x)$), and use a subscript to emphasize the dependence of measures on the choice of distribution parameterization (e.g., $p_{\phi}(\hat{z}|x)$). The expectation is denoted by $\bE[\cdot]$. The Shannon entropy and mutual information are denoted by $H(X)$ and $I(X;Y)$, respectively. $[K]$ shall denote the set $\{1,2, \dots, K\}$ for the positive integer $K\geq 1$.}

We consider a scenario in which the data owner has access to three random variables, $U$, $S$, and $X$, with a joint distribution $P_{USX}$. The variable $X\in\mathcal{X}$ is the observable data variable, $U\in\mathcal{U}$ is the utility variable utilized for the downstream task, and $S\in\mathcal{S}$ is the sensitive variable related to the sensitive group membership of each data sample (e.g., disability, ethnicity, and sexual orientation). In the considered problem, the observable data $X$ is encoded into the representation $Z \in \mathcal{Z}$, which can be formally described with the help of the following Markov chain
\begin{align}
    (U, S) \longleftrightarrow X \longleftrightarrow Z. \label{eq: markov-1}
\end{align}
Our goal is to encode each data point $x \sim X$ into a representation $z \sim {Z}$ that preserves the utility about $U$ while ensuring two objectives: \romannumeral1) \emph{local differential privacy}; and \romannumeral2) \emph{fairness}. We elaborate on these objectives in Section~\ref{subsec:ldp} and Section~\ref{subsec:ULT}, respectively.

\subsection{Local Differential Privacy}\label{subsec:ldp}
Local differential privacy (LDP)~\cite{cormode2018privacy} is a stronger form of differential privacy that guarantees that a randomized mapping generates indistinguishable distributions over outputs on any different data samples, without relying on a trustworthy central authority. Let $P_{Z|X}$ be a mapping from data source $X$ to encoded representation $Z$. LDP ensures that \emph{$Z$ should be indistinguishable such that any adversary cannot distinguish any different data samples $x\not= x' \in \mathcal{X}$ based on released representations}, which is formally defined as follows:
\begin{definition}[Local differential privacy]
    For any $\epsilon \geq 0$, a mapping $P_{Z|X}: \mathcal{X} \to \mathcal{Z}$ is $\varepsilon$-local differential private, if for every $x \not = x' \in \mathcal{X}$ and any measurable set $C \in \mathcal{Z}$ we have 
    \begin{align}
        \frac{\Pr[z\in C | x]}{\Pr[z\in C |x']} \leq e^{\epsilon}. \label{eq: ldp}
    \end{align}
\end{definition}

\subsection{Fairness: Utility-Leakage Tradeoff}\label{subsec:ULT}
To ensure the fairness of learned representations, it is crucial to \emph{obfuscate any information related to the sensitive variable $S$ to avoid any potential discrimination}. Our particular emphasis lies on group fairness and fairness defined through \emph{unawareness}, which means an algorithm is fair as long as any protected attributes are not explicitly used in the decision-making process~\cite{mehrabi2021survey}. For example, if $S$ is gender and the learned representations are independent of the gender attribute of the data source, then the representations cannot introduce undesired biases in downstream models due to gender discrimination.  In this work, mutual information $I(S;Z)$ is used to measure the quantity of sensitive information contained in $Z$ with respect to $S$.

However, in most cases, $U$ and $S$ exhibit dependence, which leads to an increase in sensitive information leakage as the utility is amplified. To describe this fundamental tradeoff, we introduce the concept of a \emph{utility-leakage pair}.
\begin{definition}[Utility-leakage pair]
   For a given data source $P_{U S X}$ and a mapping $P_{Z|X}$, the utility-leakage pair $(\Gamma, \Omega)$ is defined by $\Gamma \triangleq I(U; Z)$ and $\Omega \triangleq I(S; Z)$, respectively.
\end{definition}
When considering a data source $P_{U S X}$ and a mapping $P_{Z|X}$, it is imperative to assess whether the obtained utility-leakage pairs $(\Gamma, \Omega)$ are optimal. Specifically, optimality is achieved when it is impossible to increase the utility without augmenting the leakage, or when it is impossible to reduce the leakage without diminishing the utility.
Thus, the optimality of $(\Gamma, \Omega)$ can be defined as follows.
\begin{definition}[Optimality]
    Given the optimal utility-leakage pairs are characterized by the optimization problem $G(\gamma, P_{USX}$) for $\gamma \geq 0$, defined as
    \begin{align}
        G(\gamma, P_{USX}) = \inf\limits_{P_{Z|X}: I(U; Z) \geq \gamma } I(S; Z) \label{eq:optimal}
    \end{align}
\end{definition}
The $\gamma$ in $G(\gamma, P_{USX})$ is the minimal amount of required information about $U$ disclosed via $Z$. It is noteworthy that the optimization problem presented in \eqref{eq:optimal} is a widely studied generalized IB problem that generalizes two well-known problems: Vanilla IB problems~\cite{tishby2000information} when $X=S$, and privacy funnel problems~\cite{du2012privacy} when $U=X$. In particular, the investigation of the solution $G(\gamma, P_{USX}) = 0$ for some $\gamma \geq 0$, which corresponds to achieving perfect obfuscation, has been the subject of extensive research in information theory~\cite{razeghi2020perfect, calmon2015fundamental, rassouli2021perfect}.

For the considered problem presented in \eqref{eq: markov-1}, this work aims to address the following question: \emph{How can we develop a representation encoding that achieves both privacy and fairness objectives simultaneously? Additionally, what is the interplay between privacy and fairness in this context?}

\section{Main Framework and Theoretical Analysis}
In this section, we present an IB-based framework that incorporates LDP randomizers for effective information obfuscation. We then provide a comprehensive theoretical analysis, accompanied by our main result regarding the interrelations between fairness and privacy.
\label{sec:method-1}
\subsection{Information Obfuscation with Local Differential Privacy}
Many IB-based methods for information obfuscation are subjected to a constraint on the information complexity of representation measured by $I(X; Z)$. This complexity constraint is not only driven by compression purposes but also serves to prevent sensitive information leakage~\cite{atashin2021variational}.

Instead of directly minimizing the mutual information $I(X;Z)$ through optimization,  we incorporate LDP into the encoding process and leverage the introduced privacy-preserving mechanism to limit sensitive information leakage. Specifically, we encode $X$ into an intermediate representation $\hZ$ with an encoder $p_{\bphi}(\hat{z}|x)$ that is parameterized by $\bphi$. Then, $\hat{Z}$ is mapped to the obfuscated representation $Z$ via an $\epsilon$-LDP randomizer $\cM: \mathcal{\hZ} \to \cZ$, inducing a noisy encoder $p_{\bphi}(z|x)$ from $X$ to $Z$. This probabilistic model defines the underlying data structure of our proposed method:
\begin{align}
     (U, S) \longleftrightarrow X \stackrel{\bphi}{\longleftrightarrow} \hZ  \stackrel{\cM}{\longleftrightarrow} Z.  \label{eq: markov-2}
\end{align}
  
Having introduced an $\epsilon$-LDP randomizer $\cM$, the proposed optimization for the $p_{\bphi}(\hat{z}|x)$ is:
\begin{align}
    g(\gamma, P_{USX}, \cM) = \max\limits_{\substack{p_{\bphi}(\hat{z}|x):\\Z = \cM(\hZ) \\ I(U; Z) \geq \gamma} } I(X; Z|S). \label{eq: method}
\end{align}

The proposed optimization problem $g(\gamma, P_{USX}, \cM)$ deviates from $G(\gamma, P_{USX})$ presented in \eqref{eq:optimal} in two primary aspects: Firstly, compared to $G(\gamma, P_{USX})$, it incorporates a randomizer $\cM$ in the encoding process, which introduces stochasticity into the representation $Z$ and hence limits the complexity $I(X;Z)$. Secondly, the objective function $I(X;Z|S)$ is the conditional MI that measures the amount of information retained in the representation $Z$ about $U$ while excluding $S$. From the Markov chain \eqref{eq: markov-2} and the chain rule of MI, $I(S;Z) = I(X; Z) - I(X; Z|S)$, maximizing $I(X;Z|S)$ can reduce the sensitive information leakage $I(S;Z)$ when $I(X;Z)$ is limited. 

In the following subsections, the theoretical analysis is conducted to justify the proposed design, i.e., achieving LDP and fairness simultaneously, and investigate their interplay in the proposed IB-based framework.

\subsection{Impact of LDP-Mechanisms on Representation Encoding}
In order to comprehensively examine the LDP mechanisms incorporated within the proposed methods, as delineated by \eqref{eq: method}, our objective is to characterize them as an integral component of the overall noisy encoding process $P(Z|X)$. 
We present the following lemmas to formally establish that the incorporation of an $\epsilon$-LDP mechanism $\cM$ subsequent to the encoding process $P(\hat{Z}|X)$ ensures that the encoding $P(Z|X)$ also adheres to $\epsilon$-LDP (Lemma~\ref{lem:LDP}), while simultaneously bounding the information complexity $I(Z;X)$ by the privacy budget $\epsilon$ (Lemma~\ref{lem: MIBound}).
\begin{lemma}
    For any $P_{\hat{Z}|X}$, an $\epsilon$-LDP mechanism $\cM: \mathcal{\hat{Z}} \to \mathcal{Z}$ induces a mapping $P_{Z|X}$ that satisfies $\epsilon$-LDP. \label{lem:LDP}
\end{lemma}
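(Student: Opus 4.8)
The plan is to prove Lemma~\ref{lem:LDP} directly from the definition of $\epsilon$-LDP by a composition argument, treating the noisy encoder $P_{Z|X}$ as the marginalization of the two-stage process $X \to \hat{Z} \to Z$. First I would fix arbitrary data points $x \neq x' \in \mathcal{X}$ and an arbitrary measurable set $C \in \mathcal{Z}$, and write out the induced conditional probability $\Pr[z \in C \mid x]$ by conditioning on the intermediate representation. Using the Markov structure $X \stackrel{\bphi}{\longleftrightarrow} \hat{Z} \stackrel{\cM}{\longleftrightarrow} Z$ from \eqref{eq: markov-2}, the randomizer $\cM$ acts only on $\hat{Z}$ and is independent of $X$ given $\hat{Z}$, so
\begin{align}
    \Pr[z \in C \mid x] = \int_{\hat{\mathcal{Z}}} \Pr[z \in C \mid \hat{z}]\, p_{\bphi}(\hat{z} \mid x)\, d\hat{z}, \label{eq:lemLDPmarg}
\end{align}
where $\Pr[z \in C \mid \hat{z}]$ is the push-forward of $\cM$ and does not depend on whether we started from $x$ or $x'$.

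Next I would invoke the $\epsilon$-LDP property of $\cM$: for every $\hat{z} \neq \hat{z}'$ we have $\Pr[z \in C \mid \hat{z}] \leq e^{\epsilon}\Pr[z \in C \mid \hat{z}']$. The key observation is that this pointwise bound on the kernel of $\cM$ can be pulled through the integral in \eqref{eq:lemLDPmarg}. Concretely, pick any reference point $\hat{z}_0$; then $\Pr[z\in C\mid \hat z] \le e^{\epsilon}\Pr[z\in C\mid \hat z_0]$ uniformly in $\hat z$, and symmetrically $\Pr[z\in C\mid \hat z_0] \le e^{\epsilon}\Pr[z\in C\mid \hat z']$ for all $\hat z'$. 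Integrating the first inequality against $p_{\bphi}(\cdot\mid x)$ gives $\Pr[z\in C\mid x]\le e^{\epsilon}\Pr[z\in C\mid \hat z_0]$, and integrating the second against $p_{\bphi}(\cdot\mid x')$ gives $\Pr[z\in C\mid \hat z_0]\le e^{\epsilon}\Pr[z\in C\mid x']$ — which would yield a weaker $2\epsilon$ bound. To recover the sharp $e^{\epsilon}$ factor, I would instead argue more carefully: since $\Pr[z\in C\mid \hat z]\le e^{\epsilon}\Pr[z\in C\mid \hat z']$ holds for the fixed $\hat z'$ and every $\hat z$, integrating the left side against $p_{\bphi}(\hat z\mid x)$ (which is a probability measure) preserves the inequality, giving $\Pr[z\in C\mid x] \le e^{\epsilon}\Pr[z\in C\mid \hat z']$; then taking the infimum over $\hat z'$ in the support of $p_{\bphi}(\cdot\mid x')$ — or better, integrating this against $p_{\bphi}(\hat z'\mid x')$ using that $\Pr[z\in C\mid x]$ is a constant with respect to $\hat z'$ — yields $\Pr[z\in C\mid x]\le e^{\epsilon}\Pr[z\in C\mid x']$, which is exactly \eqref{eq: ldp} for $P_{Z|X}$.

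The main obstacle is the technical handling of the integral bound to obtain the tight constant $e^{\epsilon}$ rather than a loose $e^{2\epsilon}$: one has to exploit that the $\epsilon$-LDP guarantee of $\cM$ holds for \emph{every} pair of intermediate outcomes, so the bound $\Pr[z\in C\mid \hat z] \le e^{\epsilon}\Pr[z\in C\mid \hat z']$ can be integrated once on each side against the respective conditional distributions of $\hat Z$ given $x$ and given $x'$, using in each step that the other quantity is constant in the integration variable. A minor subtlety to address is measurability and the possibility that $\Pr[z\in C\mid \hat z']=0$ for some $\hat z'$ (handled by the standard convention in the ratio definition, or by noting such $\hat z'$ force $\Pr[z\in C\mid \hat z]=0$ for all $\hat z$). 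I would close by remarking that this argument is simply the well-known robustness of differential privacy to post-processing, specialized to the local setting, so the proof is short once the marginalization identity \eqref{eq:lemLDPmarg} is set up.
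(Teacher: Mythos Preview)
Your proposal is correct and follows essentially the same composition/marginalization argument as the paper's proof. The only cosmetic difference is that the paper pivots through the single value $p_{\text{min},C}=\min_{\hat z}\Pr[z\in C\mid\hat z]$ (bounding $\Pr[z\in C\mid\hat z]\le e^{\epsilon}p_{\text{min},C}$ on one side and $p_{\text{min},C}\le\Pr[z\in C\mid\hat z]$ on the other), whereas you keep $\hat z'$ arbitrary and integrate against $p_{\bphi}(\hat z'\mid x')$; both routes yield the tight $e^{\epsilon}$ factor for the same reason, and your version has the minor advantage of not needing the minimum to be attained.
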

\begin{proof}
    We prove the lemma under the assumption that $Z$ is a discrete variable. The proof can be easily extended to the continuous case and we leave this extension to the interested reader. For any measurable set $C \in \mathcal{Z}$, we use $p_{\text{min}, C}$ to denote the minimal conditional distribution $p_{\text{min}, C} = \min_{\hat{z}\in \mathcal{\hat{Z}}} \Pr[z\in C|\hat{z}]$. Then, for each $x \not = x' \in \mathcal{X}$, we have 
    \begin{align}
        \Pr[z\in C | x] &= \sum\limits_{\hat{z} \in \mathcal{\hat{Z}}} \Pr[z\in C | \hat{z}] p_{\hat{Z}|X}(\hat{z} |x)\\ 
        & \overset{(a)}{\leq} \sum\limits_{\hat{z}\in \mathcal{\hat{Z}}} e^{\epsilon} p_{\text{min}, C} p_{\hat{Z}|X}(\hat{z}|x)\\
        & = e^{\epsilon} p_{\text{min}, C}\\
        & = e^{\epsilon} \sum\limits_{\hat{z}\in \mathcal{\hat{Z}}} p_{\text{min}, C} p_{\hat{Z}|X}(\hat{z}|x')\\
        & \overset{(b)}{\leq} e^{\epsilon}\sum\limits_{\hat{z}\in \mathcal{\hat{Z}}}  \Pr[z\in C|\hat{z}]p_{\hat{Z}|X}(\hat{z}|x')\\
        & = e^{\epsilon} \Pr[z\in C | x'], \label{eq: proof-lemma-1}
    \end{align}
    where the first inequality $(a)$ follows from the definition of $\epsilon$-LDP randomizer $\cM$ and the second inequality $(b)$ holds because $\Pr[z\in C|\hat{z}] \geq \min_{\hat{z}\in \mathcal{\hat{Z}}} \Pr[z\in C|\hat{z}]=p_{\text{min}, C}$ for each $\hat{z}\in \mathcal{\hat{Z}}$. Hence, the induced mapping $P_{Z|X}$ satisfies $\epsilon$-LDP from \eqref{eq: proof-lemma-1}. 
\end{proof}
With Lemma~\ref{lem:LDP}, we can ensure that, once the $\epsilon$-LDP mechanisms are integrated, no matter what the mapping $P_{\hat{Z}|X}$ is learned, the induced mapping $P_{Z|X}$ also satisfies $\epsilon$-LDP. This lemma ensures that the representation encoding process is $\epsilon$-LDP all the time, even in the training process.

Then, we prove that introducing $\cM$ can limit the information complexity $I(X;Z)$ by the privacy budget $\epsilon$, formally stated in the following lemma.
\begin{lemma}
An $\epsilon$-LDP mechanism $P_{Z|X}$ generates a representation $Z = \cM(\hat{Z})$ with 
\begin{align}
    I(Z; X) &\leq \epsilon.
\end{align}
\label{lem: MIBound}
\end{lemma}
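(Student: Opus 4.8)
`\textbf{Proof proposal.}` The plan is to bound the mutual information $I(Z;X)$ directly from the $\epsilon$-LDP property of the induced channel $P_{Z|X}$, which we already have from Lemma~\ref{lem:LDP}. The key observation is that $I(Z;X) = \bE_{X}\!\left[ D_{\mathrm{KL}}\!\left( P_{Z|X} \,\|\, P_Z \right) \right]$, where $P_Z = \bE_{X'}[P_{Z|X'}]$ is the marginal, and the marginal is itself a convex mixture of the conditionals $P_{Z|X'}$ over $x' \in \mathcal{X}$.

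First I would fix $x \in \mathcal{X}$ and control the pointwise density ratio $\frac{dP_{Z|X=x}}{dP_Z}$. Since $P_Z$ is an average of the $P_{Z|X=x'}$, the $\epsilon$-LDP bound $\frac{\Pr[z\in C|x]}{\Pr[z\in C|x']} \leq e^{\epsilon}$ from \eqref{eq: ldp} gives, after integrating against the mixing measure over $x'$, that $\Pr[z\in C|x] \leq e^{\epsilon}\Pr[z\in C]$ for every measurable $C$, hence $\frac{dP_{Z|X=x}}{dP_Z} \leq e^{\epsilon}$ almost everywhere. Then $D_{\mathrm{KL}}(P_{Z|X=x} \,\|\, P_Z) = \bE_{Z\sim P_{Z|X=x}}\!\left[ \log \frac{dP_{Z|X=x}}{dP_Z} \right] \leq \log e^{\epsilon} = \epsilon$. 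Averaging over $X$ yields $I(Z;X) \leq \epsilon$.

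The main obstacle, such as it is, is making the mixture step rigorous: I must justify that the $\epsilon$-LDP inequality, which is a statement about pairs $(x, x')$, passes through the averaging over $x'$ (and, in the continuous case, that the Radon--Nikodym derivative $\frac{dP_{Z|X=x}}{dP_Z}$ exists and inherits the uniform bound). Concretely, writing $P_Z(C) = \int_{\mathcal{X}} \Pr[z\in C|x'] \, dP_X(x')$, and using $\Pr[z\in C|x] \leq e^{\epsilon} \Pr[z\in C|x']$ for all $x'$, I integrate both sides against $dP_X(x')$ to get $\Pr[z\in C|x] \leq e^{\epsilon} P_Z(C)$; this holds for all $C$, so the absolute continuity $P_{Z|X=x} \ll P_Z$ holds with density bounded by $e^{\epsilon}$. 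Everything else is the standard identity $I(Z;X) = \bE_X[ D_{\mathrm{KL}}(P_{Z|X} \| P_Z)]$ together with monotonicity of $\log$, so no heavy calculation is needed. I would present the discrete case explicitly (replacing integrals by sums over $\mathcal{X}$ and densities by pmf ratios) and remark that the continuous case is identical.
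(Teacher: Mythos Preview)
Your proposal is correct and follows essentially the same approach as the paper. Both arguments average the pairwise $\epsilon$-LDP inequality over $x' \sim P_X$ to obtain the pointwise bound $p_{Z|X}(z\mid x)/p_Z(z) \leq e^{\epsilon}$ (the paper writes this as $p_Z(z) \geq p_{Z|X}(z\mid x)e^{-\epsilon}$), and then plug this into the log-ratio form of mutual information; your phrasing via $I(Z;X)=\bE_X[D_{\mathrm{KL}}(P_{Z|X}\Vert P_Z)]$ and the Radon--Nikodym remark are just a slightly more measure-theoretic packaging of the same step.
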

\begin{proof}
By the definition of $\epsilon$-LDP in \eqref{eq: ldp}, for each $z \in \mathcal{Z}$ and $x \in \mathcal{X}$, we have
\begin{align}
    p_Z(z)  &= \bE_{p_X(x')}\left[p_{Z|X}(z|x')\right] \\
    &\geq \bE_{p_X(x')}\left[p_{Z|X}(z|x)e^{-\epsilon}\right]\\
    &= p_{Z|X}(z|x)e^{-\epsilon}. \label{eq: proof-lemma-2-bounded }
\end{align}
This means that the marginal distribution $p_Z(z)$ of each $z\in\mathcal{Z}$ is lower bounded by its conditional distribution $p_{Z|X}(z|x)e^{-\epsilon}$. With this result, we can bound the mutual information $I(X; Z)$ as follows:
\begin{align}
    I(Z; X) & = \bE_{p_{ZX}(z,x)}\left[\log \frac{p_{ZX}(z,x)}{p_Z(z)p_X(x)}\right]\\
    & = \bE_{p_{ZX}(z,x)}\left[\log \frac{p_{Z|X}(z|x)}{p_{Z}(z)}\right]\\
    & \overset{(a)}{\leq} \bE_{p_{ZX}(z,x)}\left[\log \frac{p_{Z|X}(z|x)}{p_{Z|X}(z|x)e^{-\epsilon}}\right]\\
    & = \epsilon,
\end{align}
where the inequality $(a)$ follows from the inequality \eqref{eq: proof-lemma-2-bounded }.
\end{proof}
Lemma~\ref{lem: MIBound} states that as long as $\cM$ is added to the encoding process $P_{Z|X}$, the information complexity $I(X; Z)$ is bounded by the privacy budget $\epsilon$. It is a natural result of applying $\epsilon$-LDP randomizers. Nevertheless, this property can be effectively exploited in our proposed optimization Problem~\eqref{eq: method} as it ensures that the information complexity $I(X;Z)$ is limited by $\epsilon$.

\subsection{Theoretical Relations and Guarantees on Privacy and Fairness}
Having investigated the impacts of the incorporation of LDP randomizers, we now present our main result on the interrelation between privacy and fairness. 
Specifically, the following theorem demonstrates that the obtained mapping $P_{Z|X}$ can achieve LDP and fairness simultaneously, with the two objectives mutually reinforcing each other.  
\begin{theorem}
Given a data source $P_{USX}$ and an $\epsilon$-LDP randomizer $\cM: \mathcal{\hat{Z}} \to \mathcal{Z}$, the encoder $p_{\bphi}(\hat{z}|x)$ obtained by $\nu^* = g(\gamma, P_{USX}, \cM)$ can induce an $\epsilon$-LDP mapping $P_{Z|X}$ that achieves an utility-leakage pair $(\Gamma, \Omega)$ with $\gamma \leq \Gamma \leq \epsilon$ and $\Omega \leq \epsilon - \nu^*$.
\label{th: main}
\end{theorem}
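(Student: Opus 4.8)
The plan is to decompose the three claimed bounds and dispatch each using the two lemmas and the Markov structure of \eqref{eq: markov-2}. The lower bound $\Gamma \geq \gamma$ is immediate: it is simply the feasibility constraint $I(U;Z) \geq \gamma$ built into the optimization problem $g(\gamma, P_{USX}, \cM)$ in \eqref{eq: method}, so any optimizing encoder $p_{\bphi}(\hat z|x)$ satisfies it by definition. The upper bound $\Gamma \leq \epsilon$ follows from the data-processing inequality applied to the sub-chain $X \longleftrightarrow Z$ in \eqref{eq: markov-2}, namely $\Gamma = I(U;Z) \leq I(X;Z)$, combined with Lemma~\ref{lem: MIBound}, which gives $I(X;Z) \leq \epsilon$ once the $\epsilon$-LDP randomizer $\cM$ is in place. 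The $\epsilon$-LDP property of the induced mapping $P_{Z|X}$ is exactly the content of Lemma~\ref{lem:LDP}, so nothing new is required there.

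The remaining and more substantive piece is $\Omega \leq \epsilon - \nu^*$. Here I would invoke the chain-rule identity already highlighted in the text after \eqref{eq: method}: since $(U,S) \leftrightarrow X \leftrightarrow \hZ \leftrightarrow Z$ is a Markov chain, we have
\begin{align}
I(S;Z) = I(X;Z) - I(X;Z|S). \label{eq: proposal-chainrule}
\end{align}
To see this one writes $I(X;Z) = I(X,S;Z) = I(S;Z) + I(X;Z|S)$, where the first equality uses $S \leftrightarrow X \leftrightarrow Z$ so that conditioning $Z$ on $(X,S)$ is the same as conditioning on $X$ (i.e. $I(S;Z|X)=0$), and the second is the chain rule for mutual information. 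Rearranging gives \eqref{eq: proposal-chainrule}. Now $\Omega = I(S;Z) = I(X;Z) - I(X;Z|S) \leq \epsilon - I(X;Z|S)$ by Lemma~\ref{lem: MIBound}, and since the optimizing encoder attains $I(X;Z|S) = \nu^* = g(\gamma, P_{USX}, \cM)$, we conclude $\Omega \leq \epsilon - \nu^*$.

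The only point demanding genuine care — the main obstacle, such as it is — is verifying that the chain-rule manipulation in \eqref{eq: proposal-chainrule} is valid for the model at hand: one must confirm that the Markov chain $(U,S) \leftrightarrow X \leftrightarrow Z$ genuinely yields $I(S;Z|X) = 0$ (equivalently, that $Z$ is conditionally independent of $S$ given $X$), which is precisely what \eqref{eq: markov-2} asserts since $Z$ is obtained from $X$ only through $\hZ$ and $\cM$ without further access to $S$. Once that is in hand, all three inequalities follow by stitching together Lemma~\ref{lem:LDP}, Lemma~\ref{lem: MIBound}, the data-processing inequality, and the definition of $\nu^*$; no new estimates are needed. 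I would also remark that the bound $\Omega \leq \epsilon - \nu^*$ is the formal statement of the paper's central message — larger attainable $\nu^*$ (more conditional information preserved) forces smaller sensitive leakage — and that the two objectives reinforce one another because the single resource $\epsilon$ simultaneously caps $I(X;Z)$ and, via the optimization, is partitioned between useful conditional information $\nu^*$ and residual leakage $\Omega$.
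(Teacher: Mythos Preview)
Your proposal is correct and follows essentially the same approach as the paper's own proof: both invoke Lemma~\ref{lem:LDP} for the $\epsilon$-LDP claim, Lemma~\ref{lem: MIBound} together with the data-processing inequality for $\gamma \leq \Gamma \leq \epsilon$, and the chain-rule identity $I(S;Z) = I(X;Z) - I(X;Z|S)$ combined with Lemma~\ref{lem: MIBound} and the definition of $\nu^*$ for the leakage bound. If anything, your write-up is slightly more careful in justifying the chain-rule step via $I(S;Z|X)=0$, which the paper simply asserts.
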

\begin{proof}
Firstly, by Lemma \ref{lem:LDP}, an $\epsilon$-LDP randomizer $\cM$ can induce an $\epsilon$-LDP mapping $P_{Z|X}$ for any $P_{\hat{Z}|X}$.

Next, we consider the sensitive information leakage $\Omega$. By the Markov chain $(U,S) \to X \to Z$ and the chain rule of MI, we have
\begin{align}
\Omega &= I(S;Z) \\
&= I(X;Z) - I(X;Z|S).
\end{align}
We want to upper bound $\Omega$ in terms of $\epsilon$.
To this end, suppose there exists a solution $\nu^*$ of the optimization Problem \eqref{eq: method} that achieves $I(X;Z|S) = \nu^*$. Then, the corresponding encoder $p_{\bphi}(z|x)$ can achieve $I(X;Z|S) = \nu^*$.
Moreover, by Lemma \ref{lem: MIBound}, an $\epsilon$-LDP $P_{Z|X}$ mapping leads to $I(X;Z) \leq \epsilon$. Hence, we have
\begin{align}
\Omega &= I(X;Z) - I(X;Z|S)\\
&\leq \epsilon - \nu^*.
\end{align}

On the other hand, from the data processing inequality of Markov chain $U \to X \to Z$, the upper bound of utility $\Gamma$ can be obtained by
\begin{align}
    \Gamma = I(U;Z) \leq I(X;Z) \leq \epsilon,
\end{align}
We also know that the utility constraint of \eqref{eq: method} lower bounds the leakage $\Gamma$ by $\gamma$. Thus, the lower bound of $\Gamma$ is established by the utility constraint of \eqref{eq: method}, $\Gamma \geq \gamma$.
Combining these inequalities, we have $\epsilon \geq \Gamma \geq \gamma$. 

Therefore, the mapping $P_{Z|X}$ satisfies $\epsilon$-LDP and can achieve a utility-leakage pair $(\Gamma, \Omega)$ with $\epsilon \geq \Gamma \geq \gamma$ and $\Omega \leq \epsilon-\nu^*$.

This completes the proof.
\end{proof}

\begin{figure}[t]
		\centering
		\includegraphics[width=0.3\linewidth]{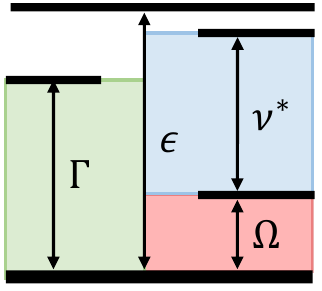}
		\caption{ Relation between $\epsilon$, $\Omega$, and $\nu^*$.}
		\label{fig:illu-PUT}
\end{figure}

Theorem~\ref{th: main} demonstrates that by employing an $\epsilon$-LDP randomizer $\cM$,  
we can obtain an $\epsilon$-LDP mapping $P_{Z|X}$ with utility $\Gamma$ and sensitive information leakage $\Omega$ both upper bounded by the privacy budget $\epsilon$.
Furthermore, the inequality $\Omega \leq \epsilon - \nu^*$ indicates that $\epsilon$ not only serves as an upper bound, but also enables the maximization of $g(\gamma, P_{USX}, \cM)$ to reduce leakage $\Omega$, as illustrated in Figure~\ref{fig:illu-PUT}. The use of a more stringent privacy-preserving mechanism $\cM$ can lead to an encoder with reduced sensitive information leakage. However, as the data processing inequality holds, the utility constraint $I(Z;U) \geq \gamma$ necessitates sufficiently large values of $\epsilon$ to ensure that the representation $Z$ contains adequate utility information. As such, the $\epsilon$-LDP randomizer $\cM$ is capable of effectively controlling the utility-leakage tradeoff, as validated in later experiment sections (see Figures~\ref{fig:AD} and~\ref{fig: MI}).

To further investigate the impact of $\epsilon$ on the utility-leakage tradeoff, the following corollaries formally demonstrate that specific values of $\epsilon$ can lead to optimal tradeoffs and perfect obfuscation.
\begin{corollary}
If $\epsilon = 0$, the utility-leakage pair $(\Gamma, \Omega)$ can only attain values of $\Gamma = 0$ and $\Omega = 0$.
\label{cor:zero}
\end{corollary}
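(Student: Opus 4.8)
The plan is to derive Corollary~\ref{cor:zero} as a direct specialization of Theorem~\ref{th: main} to the case $\epsilon = 0$. The key observation is that when the privacy budget is zero, the $\epsilon$-LDP condition \eqref{eq: ldp} forces $\Pr[z\in C|x] = \Pr[z\in C|x']$ for all $x, x' \in \mathcal{X}$, so the representation $Z$ is independent of $X$. I would therefore first invoke Theorem~\ref{th: main} with $\epsilon = 0$ to obtain that any encoder produced by the optimization $g(\gamma, P_{USX}, \cM)$ induces a $0$-LDP mapping $P_{Z|X}$ achieving a utility-leakage pair $(\Gamma, \Omega)$ with $0 \leq \Gamma \leq 0$, hence $\Gamma = 0$.

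Next I would handle $\Omega$. From Lemma~\ref{lem: MIBound} applied with $\epsilon = 0$, we immediately get $I(X; Z) \leq 0$, and since mutual information is nonnegative, $I(X; Z) = 0$. Then by the chain rule used in the proof of Theorem~\ref{th: main}, namely $\Omega = I(S; Z) = I(X; Z) - I(X; Z|S)$, together with the fact that the Markov chain $S \leftrightarrow X \leftrightarrow Z$ makes $I(S; Z) \leq I(X; Z)$ by the data processing inequality (or more directly $0 \leq I(S;Z) \leq I(X;Z) = 0$), we conclude $\Omega = 0$. One subtle point worth spelling out: $I(X;Z|S) \geq 0$ always, and $I(X;Z|S) \leq I(X;Z) + I(S;Z)$ is not needed — the cleanest route is simply $0 \leq \Omega = I(S;Z) \leq I(X;Z) = 0$ by data processing along $S \leftrightarrow X \leftrightarrow Z$.

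Finally I would note the converse direction implicit in the word "only": the pair $(\Gamma,\Omega) = (0,0)$ is in fact attainable, e.g. by any constant (degenerate) randomizer, and no other pair is attainable since both coordinates are pinned to zero by the above. The main obstacle here is essentially cosmetic rather than mathematical: one must be careful that the utility constraint $I(U;Z) \geq \gamma$ in \eqref{eq: method} is only feasible when $\gamma = 0$ once $\epsilon = 0$, so the statement should be read as: whenever the optimization problem is feasible (which forces $\gamma = 0$), its solution yields $\Gamma = \Omega = 0$. I expect the proof to be just a few lines, chaining Theorem~\ref{th: main} and Lemma~\ref{lem: MIBound} with nonnegativity of mutual information.
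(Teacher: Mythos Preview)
Your proposal is correct and follows essentially the same route as the paper: Lemma~\ref{lem: MIBound} with $\epsilon=0$ forces $I(X;Z)=0$, and then the data processing inequality along the Markov chains $U \to X \to Z$ and $S \to X \to Z$ pins both $\Gamma$ and $\Omega$ to zero. The only cosmetic difference is that the paper bounds $\Gamma$ directly via $I(U;Z)\leq I(X;Z)$ rather than detouring through Theorem~\ref{th: main}; your added remark about feasibility of the constraint $I(U;Z)\geq\gamma$ forcing $\gamma=0$ is a useful clarification not made explicit in the paper.
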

\begin{proof}
    From data processing inequality of Markov chains $U \to X \to Z$ and $S \to X \to Z$, the upper bound of $I(U;Z)$ and $I(S;Z)$ can be obtained by 
    \begin{align}
        &0 \leq I(U;Z) \leq I(X; Z),\\
        &0 \leq I(S;Z) \leq I(X; Z).
    \end{align}
    And the Lemma \ref{lem: MIBound} states that $I(X; Z) \leq \epsilon$. If $\epsilon=0$, then $ I(S; Z) = I(X; Z) = 0$ and $I(U;Z) = I(X; Z) = 0$. Therefore, the obtained mapping only achieve $(\Gamma, \Omega)$ with $\Gamma = 0$ and $\Omega = 0$.
\end{proof}
This corollary shows that if the privacy parameter $\epsilon$ is set to zero, the only achievable utility-leakage pair is $(\Gamma, \Omega) = (0,0)$, which corresponds to the perfect obfuscation but no utility. This result is straightforward and establishes the equivalence between perfect LDP and perfect obfuscation.  
\begin{corollary}
If $\epsilon = \gamma$, the obtained encoder can achieve the optimal $(\Gamma, \Omega)$ characterized by \eqref{eq:optimal}.
  \label{cor:optimal}
\end{corollary}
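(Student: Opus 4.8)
The plan is to show that when $\epsilon = \gamma$, the two bounds established in Theorem~\ref{th: main} collapse onto the optimal frontier described by $G(\gamma, P_{USX})$ in \eqref{eq:optimal}. First I would invoke Theorem~\ref{th: main} directly: with $\epsilon = \gamma$, the utility bound $\gamma \leq \Gamma \leq \epsilon$ forces $\Gamma = I(U;Z) = \gamma$ exactly, so the utility constraint is met with equality. This is the key observation — setting $\epsilon = \gamma$ leaves no slack in the utility, which is precisely the regime in which the $G$-optimization in \eqref{eq:optimal} is tight.

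Next I would argue that the leakage $\Omega$ attained by the encoder is in fact the minimum possible leakage consistent with $I(U;Z) \geq \gamma$, i.e. $\Omega = G(\gamma, P_{USX})$. From Lemma~\ref{lem: MIBound} we have $I(X;Z) \leq \epsilon = \gamma$, and combining with $\Gamma = I(U;Z) = \gamma \leq I(X;Z)$ from the data processing inequality of $U \to X \to Z$, we get $I(X;Z) = \gamma$ as well. Then the chain rule $\Omega = I(S;Z) = I(X;Z) - I(X;Z|S) = \gamma - \nu^*$, so $g(\gamma, P_{USX}, \cM) = \nu^*$ directly controls the leakage. The point is that maximizing $I(X;Z|S)$ subject to $I(U;Z) \geq \gamma$ and $I(X;Z) \le \gamma$ is equivalent to minimizing $I(S;Z)$ subject to $I(U;Z) \geq \gamma$, since $I(X;Z)$ is already pinned to $\gamma$; hence $\nu^*$ is as large as possible and $\Omega = \gamma - \nu^*$ is as small as possible, which is exactly $G(\gamma, P_{USX})$.

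The main obstacle I anticipate is making the equivalence in the previous paragraph fully rigorous: the optimization $g(\gamma, P_{USX}, \cM)$ ranges only over encoders $p_{\bphi}(\hat z|x)$ composed with the \emph{fixed} randomizer $\cM$, whereas $G(\gamma, P_{USX})$ ranges over \emph{all} mappings $P_{Z|X}$. So one must verify that when $\epsilon = \gamma$ there exists a choice of (or a family of) $\epsilon$-LDP randomizers $\cM$ such that the composition $X \to \hat Z \to Z$ can realize the minimizer of \eqref{eq:optimal} — or, more carefully, argue that the minimizer of \eqref{eq:optimal} at this operating point is automatically $\epsilon$-LDP with $\epsilon = \gamma$ (which follows since any optimal $Z$ has $I(X;Z) = \gamma$, though $I(X;Z) \le \epsilon$ is weaker than $\epsilon$-LDP in general and would need the structure of $\cM$). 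I would handle this by noting that the statement is about what the obtained encoder \emph{can achieve}: since $\Gamma = \gamma$ and $\Omega = \gamma - \nu^*$ with $\nu^*$ maximal over the feasible set, the pair $(\Gamma,\Omega)$ lies on the boundary curve $G(\cdot, P_{USX})$ at the point $\gamma$, giving the claimed optimality. I would close by remarking that this identifies $\epsilon = \gamma$ as the critical privacy budget at which LDP imposes no cost beyond the intrinsic utility-leakage tradeoff.
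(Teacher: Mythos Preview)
Your proposal is essentially the paper's own argument: sandwich $\gamma \le I(U;Z) \le I(X;Z) \le \epsilon = \gamma$ to pin $I(X;Z) = \gamma$, then use the chain rule $I(S;Z) = I(X;Z) - I(X;Z|S)$ to convert the maximization of $I(X;Z|S)$ into the minimization of $I(S;Z)$ and identify the result with $G(\gamma, P_{USX})$. The obstacle you flag---that $g$ optimizes only over encoders composed with the fixed randomizer $\cM$ while $G$ ranges over all $P_{Z|X}$---is genuine, but the paper's own proof simply drops the $\cM$-constraint without comment at exactly that step, so your treatment is if anything more careful than the original.
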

\begin{proof}
    The optimal $(\Gamma, \Omega)$ is characterized by optimization \eqref{eq:optimal}, defined as, 
    \begin{align}
        G(\gamma, P_{USX}) = \inf\limits_{P_{Z|X}: I(U; Z) \geq \gamma } I(S; Z). 
    \end{align}
    As we presented before, the data processing inequality implies $I(U; Z) \leq I(X; Z)$. 
    If $\epsilon = \gamma$, then $I(X;Z) = \epsilon$ since $\gamma \leq I(U; Z)\leq I(X;Z)$ and $I(X; Z) \leq \epsilon$ from Lemma \ref{lem: MIBound}. 
    
    When $\gamma = \epsilon$, the optimization problem $g(\gamma, P_{USX}, \cM)$ can be formulated by:
    \begin{align}
        g(\gamma, P_{USX}, \cM) &= \max\limits_{\substack{p_{\bphi}(\hat{z}|x):\\Z = \cM(\hZ) \\ I(U; Z) \geq \gamma} } I(X; Z|S) \\
        &= \min\limits_{\substack{p_{\bphi}(\hat{z}|x):\\Z = \cM(\hZ) \\ I(U; Z) \geq \gamma} } \epsilon- I(X; Z|S)\\
        &= \min \limits_{\substack{p_{\bphi}(\hat{z}|x):\\Z = \cM(\hZ) \\ I(U; Z) \geq \gamma} } I(X;Z) -I(X; Z|S)\\
        &= \min\limits_{P_{Z|X}: I(U; Z) \geq \gamma} I(S; X) \\
        & = G(\gamma, P_{USX})
    \end{align}
    Thus, the optimization problem $g(\gamma, P_{USX}, \cM)$ is equal to the optimization problem $G(\gamma, P_{USX})$ when $\gamma = \epsilon$, producing an encoder that achieves the optimal $(\Gamma, \Omega)$.
\end{proof}

Corollary~\ref{cor:optimal} of Theorem~\ref{th: main} suggests that setting the privacy parameter $\epsilon$ equal to the desired utility constraint $\gamma$ leads to the attainment of the optimal trade-off between utility and sensitive information leakage, as characterized by $P_{Z|X}$ obtained through $g(\gamma, P_{USX}, \cM)$. Thus, a specific value of the privacy parameter for LDP can aid in achieving the best possible information obfuscation while satisfying the desired level of utility. This result can be beneficial in designing a fair representation learning system that requires LDP, as it allows the optimal utilization of randomness for privacy preservation to achieve optimal information obfuscation.
\paragraph{Discussion on privacy and fairness.}
The current empirical research examining the correlation between privacy and fairness has primarily concentrated on their conflicting impacts. Scholars have analyzed circumstances where fair learning models may compromise privacy~\cite{mozannar2020fair, jagielski2019differentially, tran2021differentially}, and conversely, where privacy-preserving techniques can hinder algorithmic fairness~\cite{fioretto2022differential, bagdasaryan2019differential, pujol2020fair, farrand2020neither}. However, our theoretical results contradict this conventional viewpoint by establishing a mutually beneficial relationship between privacy and fairness. Specifically, we demonstrate that introducing LDP mechanisms into the pre-processing stage can not only safeguard user-level privacy but also restrict sensitive information leakage during disclosure to ensure fairness. This is mainly because the scenario we considered (i.e., pre-processing) differs from prior empirical studies that directly incorporate private mechanisms into learning models (i.e., in-processing).
Nevertheless, our result still has significant implications for practical applications, where the privacy-preserving noise introduced in the original algorithm can be utilized to improve algorithmic fairness, contributing to the growing recognition that addressing privacy and fairness in ML requires a holistic approach considering their interplay.

\section{Non-Adversarial Variational Encoding}
\label{sec: variational}
Next, we present the method for developing the encoding process for information obfuscation with LDP, by solving the optimization Problem \eqref{eq: method}. 

Given a data source $P_{USX}$ and an $\epsilon$-LDP randomizer $\cM$, we utilize the Lagrangian of this problem and introduce a non-negative Lagrange multiplier $\beta \geq 0 \in \bR$. This optimization problem is then equivalent to maximizing the following objective function $\cL(\phi)$:
\begin{align}
    \cL (\phi)&=  I(X; Z|S) + \beta I(U; Z) \\
    & = H(X|S) - H(X|Z;S) + \beta [H(U)-H(U|Z)]\\
    &\equiv \bE_{p(s, u, x)} \{ \bE_{p_{\bphi}(z|x)}[ \log p_{\bphi}(x|z, s) ]\\
    & \qquad \qquad \qquad + \beta \bE_{p_{\bphi}(z|x)} [ \log p_{\bphi}(u|z)]\},\label{eq: loss}
\end{align}
where the last equivalence is obtained by omitting the constant terms $H(X|S)$ and $H(U)$. In \eqref{eq: loss}, the first term is the reconstruction loss given $z$ and $s$, and also quantifies the amount of uncertainty for potential adversarial inference due to $H(S|Z) \geq H(S) -\epsilon + I(X; Z|S)$, as presented in Theorem~\ref{th: main}. The second term represents the accuracy of the inference model with the obfuscated representation $z$. Hence, the parameter $\beta$ controls the tradeoff between utility and information obfuscation.

However, the posterior distributions $p_{\bphi}(x|z,s)$ and $p_{\bphi}(u|z)$ are computationally intractable. To address this issue, we resort to the variational Bayesian method to approximate the posterior distributions. Specifically, two variational distributions $q_{\btheta}(x|z, s)$ and $q_{\btheta}(u|z)$ are introduced to approximate $p_{\bphi}(x|z,s)$ and $p_{\bphi}(u|z)$, respectively, which are parameterized by $\btheta$.  
Therefore, we obtain the variational lower bound $\cL_{V}(\bphi, \btheta)$ of $\cL(\bphi)$ as follows (For a detailed derivation of the variational lower bound, please refer to Appendix~\ref{app: variational}).
\begin{align} 
    \cL(\bphi) &\geq \cL_{V}(\bphi, \btheta) \notag \\
    & \triangleq \bE_{p(s, u, x)} \{ \bE_{p_{\bphi}(z|x)}[ \log q_{\btheta}(x|z, s) ] \\
    &\qquad \qquad \qquad \quad+ \beta \bE_{p_{\bphi}(x|z)} [ \log q_{\btheta}(u|z)]\}.
    \label{eq: variational}
\end{align}
Moreover, we denote $q_{\theta}(x|z, s)$ and $q_{\theta}(u|z)$ as \emph{utility decoder} and \emph{side decoder}, respectively, and depict the proposed variational encoding framework in Figure~\ref{fig:model}.

\begin{figure}[t]
    \centering
    \includegraphics[width=8.cm]{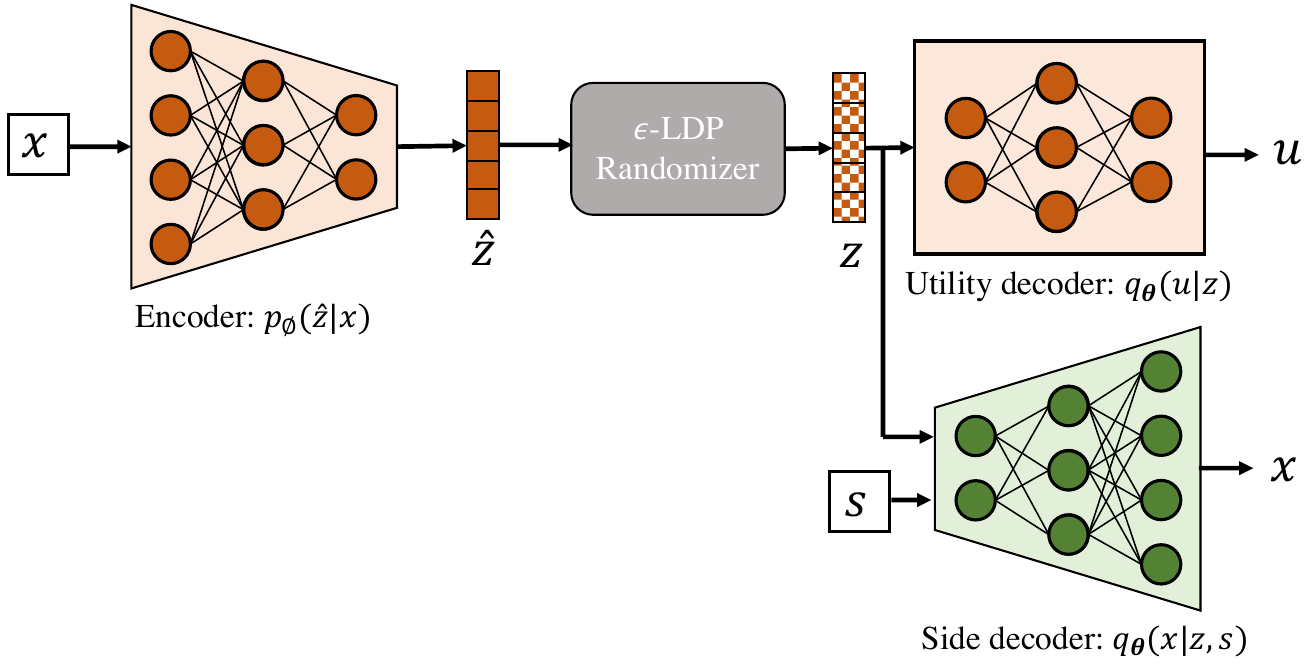}
    \caption{The proposed variational encoding framework.}
    \label{fig:model}
\end{figure}
\begin{remark}[Non-adversarial learning]
Our proposed framework for encoding fair representations departs from adversarial techniques, unlike the prevalent fair representation learning methods. While adversarial techniques are widely adopted, they cannot provide a general upper bound for sensitive information leakage and suffer from stability issues in training. 
Our approach, on the other hand, is non-adversarial and directly restricts the leakage by an upper bound of $I(S;Z) \leq \epsilon - I(X;Z|S)$, leading to a learned encoder that is more impervious to any form of adversary. 
\end{remark}
\begin{remark}[Without variational prior]
    Variational approaches for bottleneck problems often struggle to propose an appropriate prior that conforms well with the learned aggregated posterior $p_{\bphi}(z) = \bE_{p(x)}[p_{\bphi}(z|x)]$. However, the proposed variational encoding framework performs variational optimization without imposing any prior on the intractable distribution $p_{\bphi}(z)$. Therefore, the optimization does not restrict the marginal distribution $p_{\bphi}(z)$ to be similar to any pre-defined prior, allowing it to generate high-quality, well-obfuscated representations $z$ across the randomizer.
\end{remark}

Given a dataset of $N$ data points $\{ x^{(i)}\}_{i=1}^N$, as well as the corresponding samples of utility and sensitive variables $\{u^{(i)}, s^{(i)} \}_{i=1}^N$, we can now form the Monte Carlo estimation for $\cL_V(\bphi, \btheta)$ by sampling $L$ realizations of intermediate representation $\hat{z}$ from encoder $p_{\bphi}(\hat{z}|x)$ with the $\epsilon$-LDP mechanism $\cM$. We have the Monte Carlo estimate $\tilde{\mathcal{L}}_V(\bphi, \btheta) \simeq \cL_V(\bphi, \btheta)$:
\begin{align}\label{eq: MC}
    \tilde{\cL}_V(\bphi, \btheta) = \frac{1}{N} \sum\limits_{i=1}^N\{\frac{1}{L}\sum\limits_{l=1}^L[& \log q_{\btheta}(x^{(i)}|\cM(\hat{z}^{(i, l)}), s^{(i)})\\
    &+ \beta \log q_{\btheta}(u^{(i)}|\cM(\hat{z}^{(i,l)}))] \}, \notag\\
    \text{where  } \hat{z}^{(i,l)} \sim p_{\bphi}(\hat{z}|x^{(i)}).
\end{align}
Note that the obtained representation $z$ can be either continuous or discrete\footnote{We utilize the discretization part of VQ-VAE in discrete representation encoding and incorporate the quantization term into our loss \eqref{eq: MC} during the training phase. See Appendix~\ref{app: VQVAE} for more details.}. Then, we will present two typical $\epsilon$-LDP mechanisms, Laplace mechanism~\cite{dwork2014algorithmic} and randomized response~\cite{warner1965randomized}, for continuous and discrete representation learning, respectively. The detailed proof of the proposed $\epsilon$-LDP mechanisms can be found in Appendix~\ref{app: LM} and~\ref{app: RR}.
\paragraph{Continuous representation with a Laplace mechanism.} Considering the $d$-dimensional continuous representation $\hat{z} = (\hat{z}_1, \hat{z}_2, \dots, \hat{z}_d) \in \bR^d$, a local truncation is adopted to bound the sensitivity of $\hat{z}$ to achieve a better performance. Specifically, a threshold $t$ is chosen for each dimension $\hat{z}_i$ such that $-t \geq x_i \geq t$ for each $i\in [d]$. For such a representation $\hat{z}$, a Laplace mechanism defined in~\cite{dwork2014algorithmic} can be used to guarantee $\epsilon$-LDP by adding artificial Laplace noises, 
\begin{align}
    z = \hat{z} + (n_1, n_2, \dots, n_d),
    \label{eq: lap}
\end{align}
for each $\hat{z}$, $n_i \sim \text{Laplace}(0, 2td/\epsilon)$.
\paragraph{Discrete representation with randomized response mechanism.} When $\hat{z}$ is a $d$-dimensional discrete representation $\hat{z} = (\hat{z}_1,\hat{z}_2,\dots, \hat{z}_d) \in [k]^d$, the basic LDP mechanism, randomized response, can be applied on $\hat{z}$ to produce the stochastic representation $z$. To achieve $\epsilon$-LDP, the randomized response works as follows, for any $i \in [d]$ and $u \in [k]$,
\begin{align}\label{eq: RR}
    \Pr[z_i = v|u] = \left\{  
\begin{array}{lr}
    \frac{e^{\epsilon/d}}{e^{\epsilon/d}+k-1}, & \text{if } v = u,    \\
    \frac{1}{e^{\epsilon/d}+k-1} , &\text{if } v \not = u. 
\end{array}
\right. 
\end{align} 
It should be noted that the above mechanism \eqref{eq: RR} is consistent with the traditional randomized response~\cite{warner1965randomized} when $k=2$ and $d=1$.

\section{Related Work}
\paragraph{Differential privacy in ML} DP was originally proposed by~\cite{dwork2006calibrating} to protect the privacy of collected data and has been extensively used in data mining~\cite{friedman2010data} and federated learning~\cite{wei2020federated}. LDP~\cite{evfimievski2003limiting,cormode2018privacy} is a stronger notion of DP without assuming a trust data collector, which has been widely adopted in many practical applications such as web browsing behavior collection in Google's RAPPOR system~\cite{erlingsson2014rappor} and telemetry data collection in Microsoft~\cite{ding2017collecting}. While DP only establishes formal guarantees for defending against membership query attacks, it still suffers from attribute inference attacks, e.g., the sensitive attributes of data owners can be accurately inferred from differential-private datasets by some adversaries. 

\paragraph{Fair representation learning} 
There is a rich body of literature studying fairness in ML, which promotes fairness among individuals, subgroups, or groups, and can be categorized into three categories: Pre-processing, in-processing, and post-processing~\cite{mehrabi2021survey}.
Many in-processing and post-processing methods are designed for specific models and have been proposed by regularizing the models during the training stage or by modifying the trained models, respectively.
While fair representation learning is the pre-processing method that aims to ensure strong algorithmic fairness guarantees for any downstream inference models. 
The concept of fair representation learning was originally proposed by~\cite{zemel2013learning} to encode data for utility maximization while obfuscating any information about membership in the protected group. Thereafter, a plethora of different methods have been proposed and the majority of them are based on adversarial learning to obtain representations that are invariant across sensitive groups~\cite{zhao2019conditional,madras2018learning, bertran2019adversarially}. 
But the non-convexity of minmax problems in adversarial learning methods has always been a unique challenge for the convergence and stability of optimization. Besides, there also exist variational approaches such as the Variational Fair Autoencoder~\cite{louizos2015variational} and Conditional Fairness Bottleneck~\cite{rodriguez2021variational} where the VAE~\cite{kingma2013auto} was modified to produce fair encoding, as well as the methods with disentanglement~\cite{creager2019flexibly, sarhan2020fairness}. Nevertheless, it is important to note that a large majority of existing studies on fair representation learning have overlooked the aspect of differential privacy about raw data. In contrast, our proposed representation learning framework and variational encoding method are capable of achieving fairness while upholding the principles of LDP.

\paragraph{Bottleneck problems} The bottleneck problems are an important group of optimization problems with wide adoptions in machine learning and information theory. Originally, the first bottleneck problem, called IB, is introduced by~\cite{tishby2000information} to compress the data source while preserving the useful information about the target, which is widely used in compression~\cite{dai2018compressing}, representation learning~\cite{amjad2019learning}, and wireless communication~\cite{xie2022robust}. 
Due to the difficulty of mutual information optimization, the variants of IB such as variational IB \cite{alemi2016deep} and deterministic IB \cite{strouse2017deterministic} are proposed to develop tractable encoding frameworks. 
Following a similar principle of IB, the privacy funnel (PF)~\cite{du2012privacy} was proposed to minimize sensitive information leakage while maintaining the informativeness of representation. Inspired by PF, recent studies resort to conditional entropy of utility variables and sensitive variables to achieve information obfuscation, including conditional entropy bottleneck~\cite{fischer2020conditional} and conditional PF~\cite{rodriguez2021variational}. 

\paragraph{The interplay of fairness and privacy}
The intersection of DP and fairness is a topic of increasing interest. Jagielski et al.~\cite{jagielski2019differentially} initiated the study of fair learning under the constraint of differential privacy, specifically focusing on differential privacy and equalized odds. In a related study, Chang et al.~\cite{chang2021privacy} analyzed the privacy risks associated with group fairness using membership inference attacks, demonstrating the trade-off between fairness and privacy. Additionally, Tran et al.~\cite{tran2021differentially} proposed a Lagrangian dual approach to learn non-discriminatory predictors while preserving the privacy of individuals' sensitive information.
Another line of research endeavors to empirically showcase the potential negative impact of private mechanisms on fairness~\cite{fioretto2022differential, bagdasaryan2019differential, pujol2020fair, farrand2020neither}. However, there exists a dearth of methodological investigations into the harmonization of these two domains and the effective integration of differential privacy and fairness. Motivated by this gap, our work aims to explore the integration of privacy techniques into algorithmic fairness by focusing on fair representation learning with LDP.

\section{Experiments}
\label{sec: exp}
In this section, we justify our results for information obfuscation with LDP and evaluate the proposed variational method on several common fairness datasets\footnote{The experimental details including datasets, metrics, and network architectures can be found in the supplementary material.}.
Our experimental methodology commences with a toy example, which is aimed at explicating the tradeoff between utility and leakage in information obfuscation with LDP on the \texttt{colored-MNIST} dataset~\cite{lecun1998gradient}. 
We will then validate the tradeoffs and our theoretical results by numerical evaluation of real-world benchmark datasets. 
Lastly, we compare our proposed method with the representative fair representation learning baselines. 

\subsection{Experimental Setup}
\label{subsec:exp_set}

\begin{table}[t]
\centering
\caption{Basic statistics of the pre-processed datasets}
\begin{tabular}{cccc}
				\toprule
                       Dataset  & Train / Test & \# Attributes & $P(U=1)$   \\
				\midrule
    \texttt{adult}& $32561/16281$& 13  & $0.2362$  \\
    \texttt{compas} & $4320/1852$& 10  & $0.5378$ \\
    \texttt{hsls} & $10156/4353$& 57  & $0.5114$ \\
				\bottomrule
		\end{tabular}
  \label{tab:datasets}
\end{table}
\paragraph{Datasets.} We conduct experiments on one synthetic dataset \texttt{Colored-MNIST}, and three real-world benchmark datasets are widely used in fair ML problems, including the UCI \texttt{adult} dataset~\cite{Dua:2019}, the ProPublica \texttt{compas} dataset~\cite{dieterich2016compas}, and the \texttt{hsls} dataset~\cite{jeong2022fairness}. The basic information of the datasets is summarized as follows.
\begin{figure*}[t]
    \centering
    \includegraphics[width=0.7\textwidth]{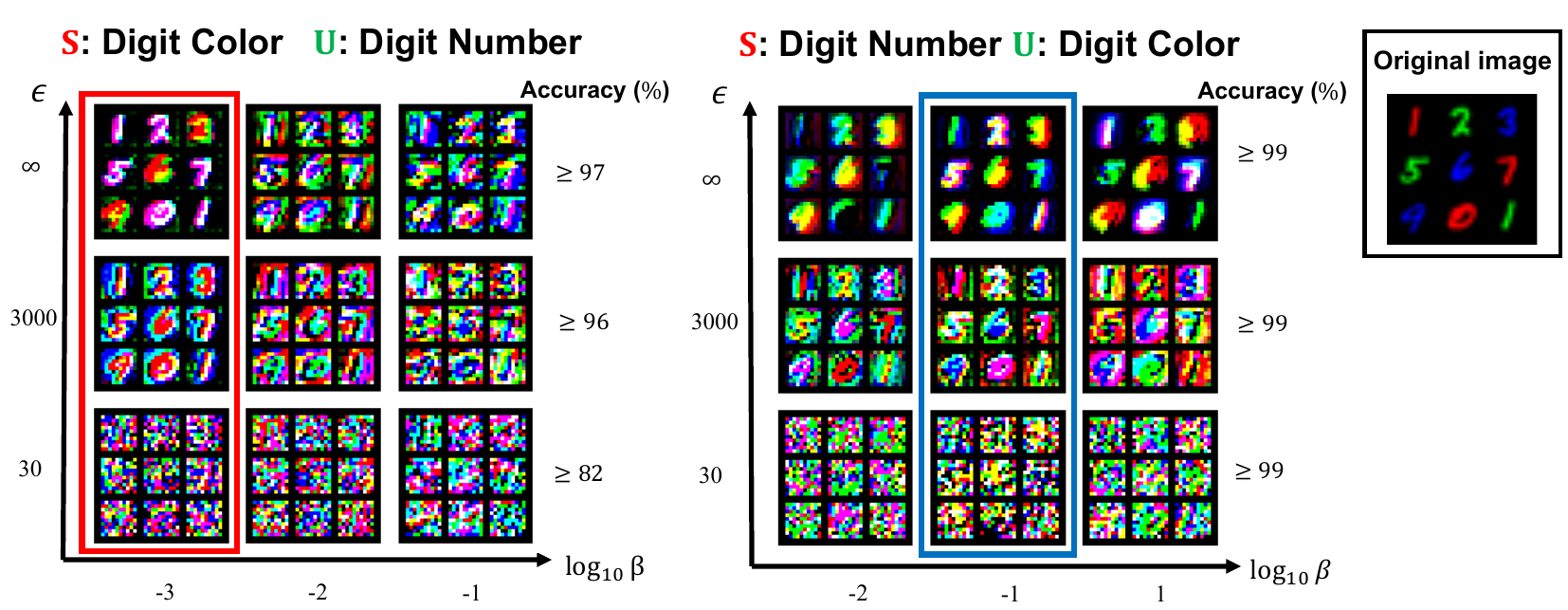}
    \caption{The encoded representations with $\epsilon$-LDP Laplace mechanism for random samples in \texttt{Colored-MNIST} dataset. {As $\epsilon$ decreases, the representations on the left become increasingly noisy, leading to more obfuscation of the color attribute (e.g., samples enclosed by a red box). Meanwhile, the representations on the right also exhibit increased noise, yet the color correlation persists (e.g., samples enclosed by a blue box).}}
    \label{fig: toy}
\end{figure*}
\begin{itemize}
    \item \texttt{Colored-MNIST}: MNIST dataset~\cite{lecun1998gradient} is a collection of handwritten digits containing 70,000 grayscale images of size 3 × 28 × 28, which is commonly used for machine learning systems. The \texttt{Colored-MNIST} is the colored version of MNIST dataset, where digits are randomly colored into red, blue, or green. 
    \item \texttt{adult}: The UCI \texttt{Adult} dataset~\cite{Dua:2019} contains 48,842 instances based on census data, which is described by 14 attributes and a binary target variable \textit{income} indicating whether a person's annual income is larger than 50K dollars. Our experiment setup follows ~\cite{zemel2013learning}, with \textit{gender} as the sensitive variable $S$ and \textit{income} as the utility variable $U$. 
    \item \texttt{compas}:  The \texttt{compas} (Correctional Offender Management Profiling for Alternative Sanctions) dataset~\cite{dieterich2016compas} contains 6172 convicted criminal instances with 12 attributes. 
    For this dataset, we use the processed binary attribute \textit{race} (whether the instance is African American) as the sensitive variable $S$ and recidivism as the utility variable $U$.
    \item \texttt{hsls}: The \texttt{hsls}~\cite{jeong2022fairness} dataset contains information about over 23,000 students from high schools in the USA. The features consist in information about the students’ demographic and academic performance, as well as data about the schools. The task is to predict exam scores while being fair with respect to race.
    \end{itemize}
    In the interest of a fair comparison, we utilize the same pre-processing techniques as in [4] for the \texttt{adult} and \texttt{compas} datasets and in Alghamdi et al. [2] for the \texttt{hsls} dataset. The basic statistics of processed real-world datasets are also summerized in Table~\ref{tab:datasets},

    \paragraph{Metrics.} Throughout the experiments, we use mutual information $I(S;Z)$ and $I(U;Z)$ to measure the sensitive information leakage and the utility derived from the learned representations. Furthermore, we employ the top-$1$ classification accuracy of the inference of side decoder to measure the utility provided by the learned representations.  In order to ascertain the efficacy of the proposed fair representation learning methodology, we also incorporate conventional metrics for assessing group fairness, namely the Demographic Parity Gap ($\Delta_{\text{DP}}$) and the Equalized Odds Gap ($\Delta_{\text{EO}}$), to evaluate the fairness of the inferences based on the learned representation.
    \paragraph{Compared Methods.} In order to assess the effectiveness of our proposed approach, we conduct a comprehensive comparison with prominent fair representation learning methodologies. These include adversarial training-based methods such as CFAIR~\cite{zhao2019conditional} and LAFTR~\cite{madras2018learning}, privacy funnel optimization-based method PPVAE~\cite{nan2020variational}, disentanglement-focused approach FFVAE~\cite{creager2019flexibly}, variational approaches CFB~\cite{rodriguez2021variational} and VFAE~\cite{louizos2015variational}, as well as the standard baseline raw data.
    \begin{figure*}[t]
		\centering
			\begin{subfigure}{0.33\textwidth}
				\includegraphics[width=.94\textwidth]{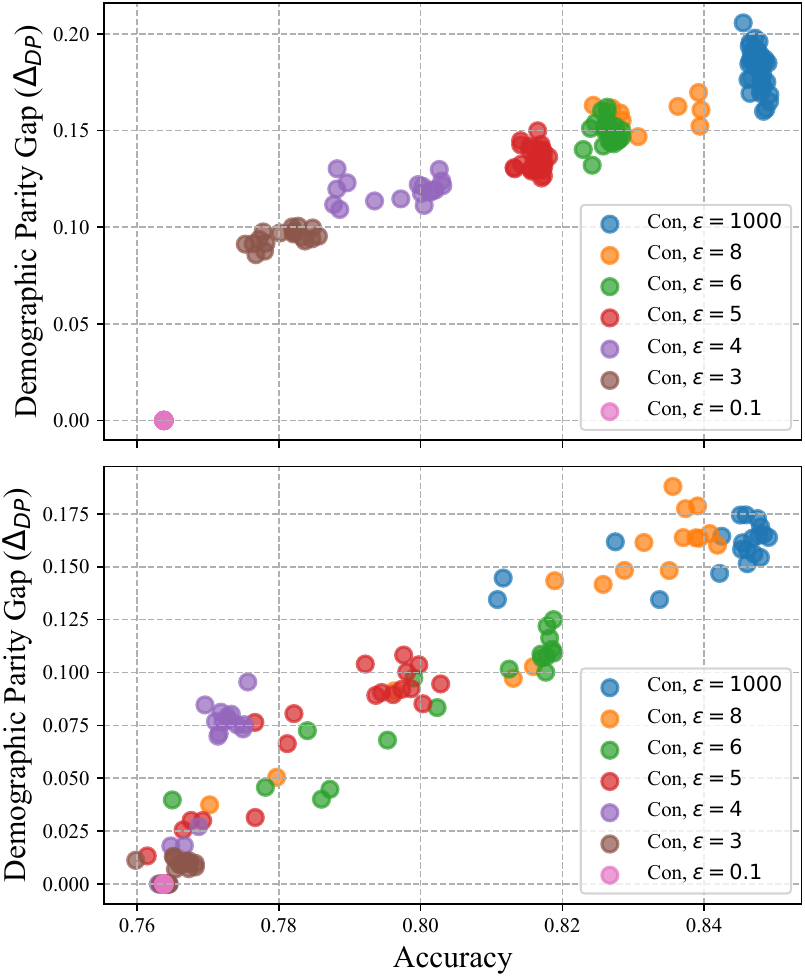}
				\caption{\texttt{adult}}
			\end{subfigure}
			\begin{subfigure}{0.33\textwidth}
				\includegraphics[width=.94\textwidth]{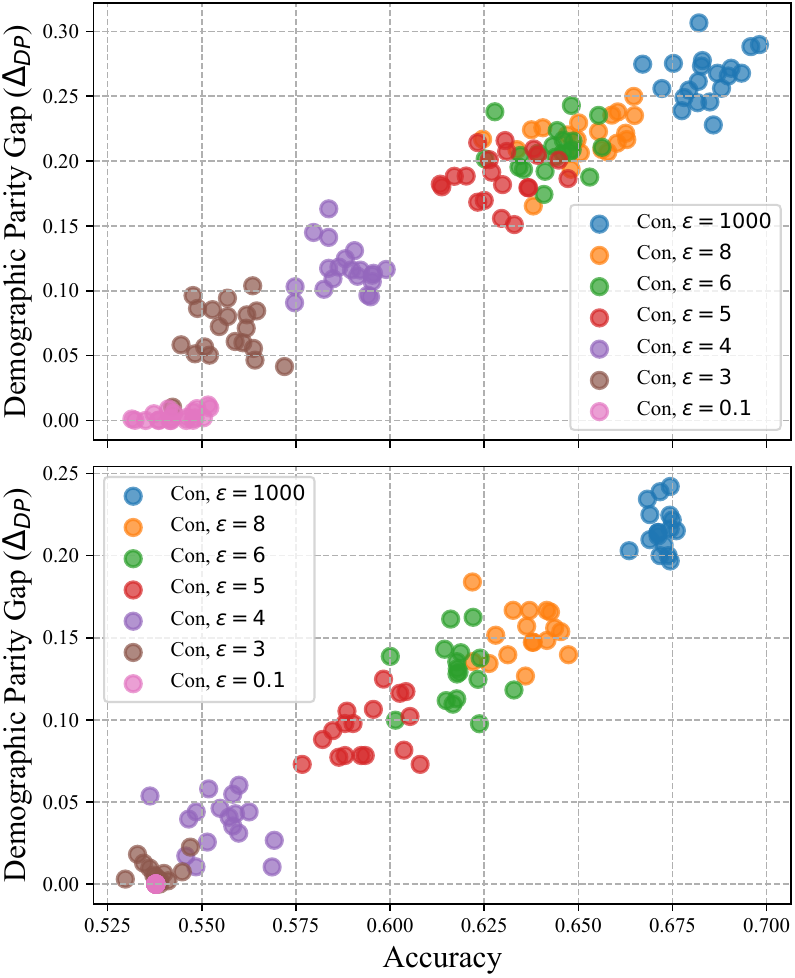}
				\caption{\texttt{compas}}
			\end{subfigure}
   \begin{subfigure}{0.33\textwidth}
		\includegraphics[width=.94\textwidth]{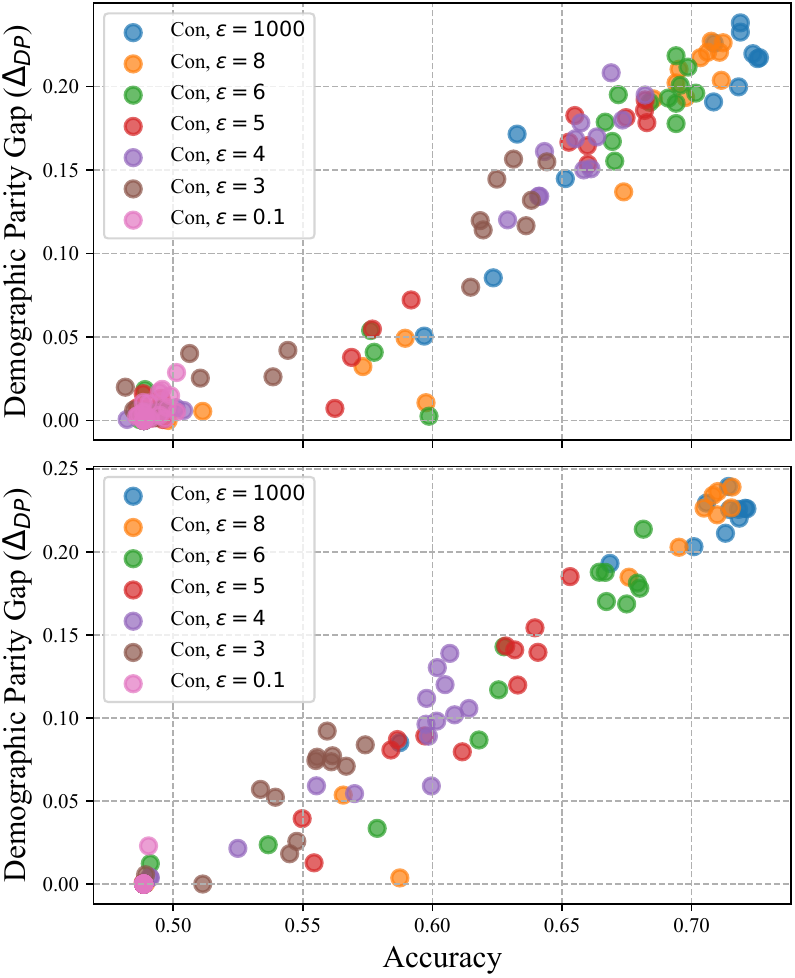}
				\caption{\texttt{hsls}}
			\end{subfigure}
  \caption{Accuracy-$\Delta_{\text{DP}}$ tradeoffs with varying values of $\epsilon \in [0, 10^3]$ and $\beta \in [0.1, 10^3]$ on (a) \texttt{adult}, (b) \texttt{compas}, and (c) \texttt{hsls} datasets.}
		\label{fig:AD}
\end{figure*}
    \paragraph{Implementation.} The procedures in the experiments are described as follows. Firstly, all the evaluated methods are trained on the training set until convergence, where the dimensions of representation $Z$ are uniformly set to $d=2$. Then, we adopt 1-hidden-layer fully-connected neural networks as the downstream inference models and train them on the representation obtained from each evaluated method to predict the utility $U$. For the proposed method, we take the utility decoder as the downstream inference model as it has the same architecture and is designed for the downstream task. Furthermore, we also train the downstream inference models on the representations obtained by the baselines with Laplace mechanisms to investigate the impact of LDP mechanisms on the baseline methods. Finally, the trained inference models are used to perform the prediction of utility $U$ on the test set. The task accuracy and fairness metrics are computed based on the prediction, and sensitive information leakage $I(U;Z)$ is estimated.
\subsection{Synthetic Dataset}
We first conduct the proposed method on the synthetic dataset, colored-MNIST, as a toy example to directly illustrate the utility-leakage and the role of LDP randomers in the encoding processes. In this toy example, we aim to construct representations $Z$ with the same dimensions as $X$. Subsequently, we train the proposed variational encoding approach using \eqref{eq: MC} with a $\epsilon$-LDP Laplace mechanism, as presented in \eqref{eq: lap}. We consider two scenarios for the data utility $U$ and the sensitive information $S$: \romannumeral1) $S$ represents color, and $U$ represents the digit number, \romannumeral2) $S$ represents digital number, and $U$ represents color. Note that a high dimensionality would result in more Laplacian noise injection, leading to excessively obfuscated representations that might hinder the visual understanding of the trade-off trends. For ease of observation in this toy example, we made some design choices including downsampling images and using relatively large values for $\epsilon$.

Figure \ref{fig: toy} illustrates the obtained representations for various $\beta$ and $\epsilon$ values, along with the corresponding inference accuracies of the side decoders for randomly selected samples of $X$. As the value of $\epsilon$ increases, the representations become increasingly obscured, making it more challenging to discern distinctions between them. This observation aligns with the inherent nature of LDP randomization.
Moreover, lower values of $\epsilon$ contribute to more effective obfuscation of sensitive information, thereby validating our theoretical results. For instance, as $\epsilon$ decreases, the representations on the left figure exhibit increased noise, resulting in greater obfuscation of the color attribute (e.g., samples enclosed by a red box). Conversely, the representations on the right also display heightened noise, yet the color correlation persists (e.g., samples enclosed by a blue box).
Nevertheless, the inference accuracy remains consistently high for $U$, regardless of the chosen $\epsilon$ and $\beta$ values. This is primarily because the color attribute is independent of the digit number in the \texttt{Colored-MNIST} dataset. Consequently, both scenarios achieve satisfactory inference accuracy, thereby providing ample utility.
\begin{figure}[t]
\centering
\begin{subfigure}{0.235\textwidth}
	\includegraphics[width=0.99\textwidth]{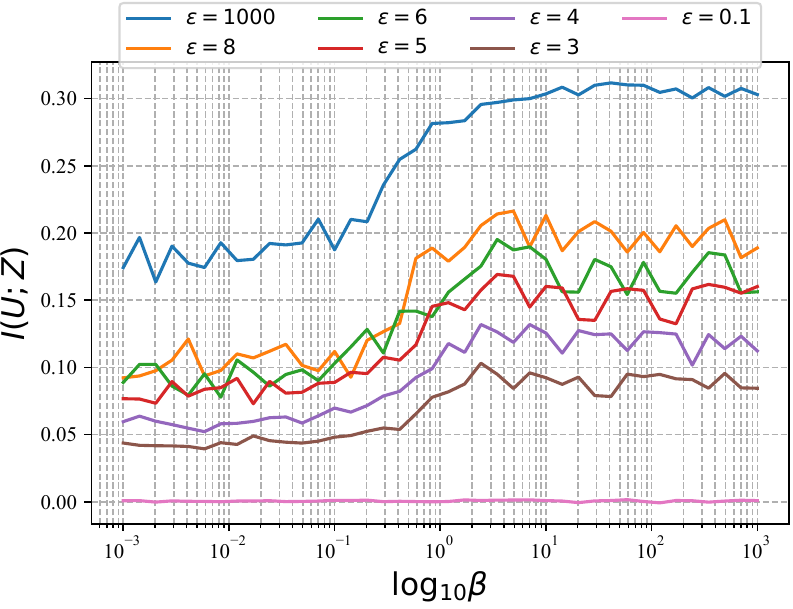}
				\caption{$I(U;Z)$}
\end{subfigure}
			\begin{subfigure}{0.235\textwidth}
			\includegraphics[width=0.99\textwidth]{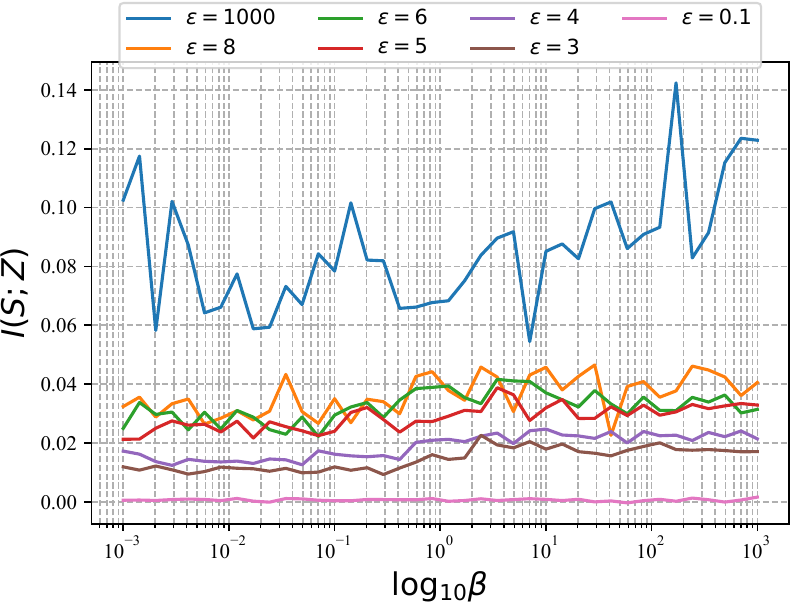}
				\caption{$I(S;Z)$}
			\end{subfigure}
   
   \vspace{5pt}
   \begin{subfigure}{0.235\textwidth}
			\includegraphics[width=1.\textwidth]{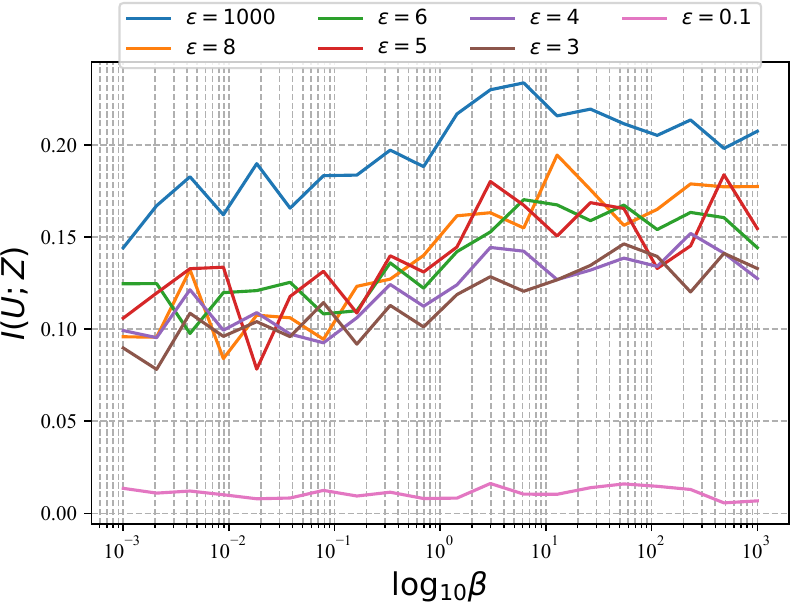}
				\caption{$I(U;Z)$}
			\end{subfigure}
   \begin{subfigure}{0.235\textwidth}
			\includegraphics[width=1.\textwidth]{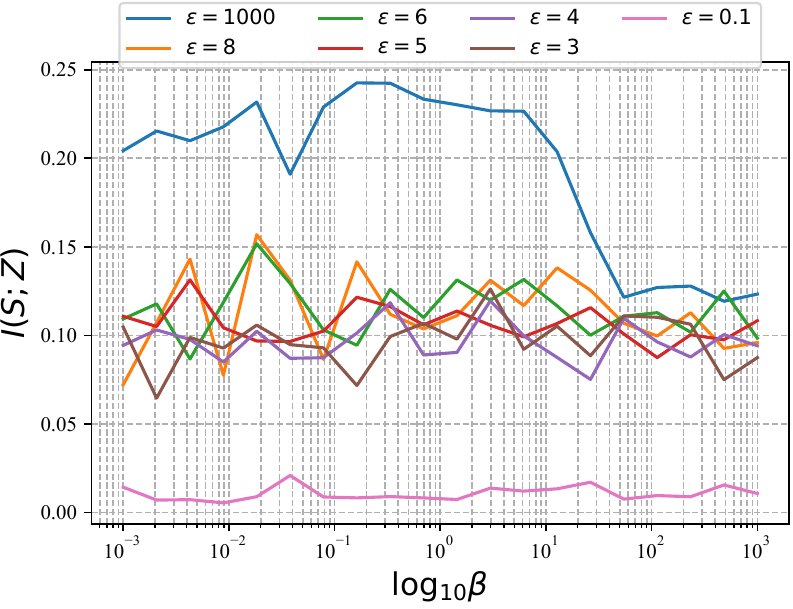}
				\caption{$I(S;Z)$}
			\end{subfigure}
   
   \vspace{5pt}
   \begin{subfigure}{0.235\textwidth}
			\includegraphics[width=1.\textwidth]{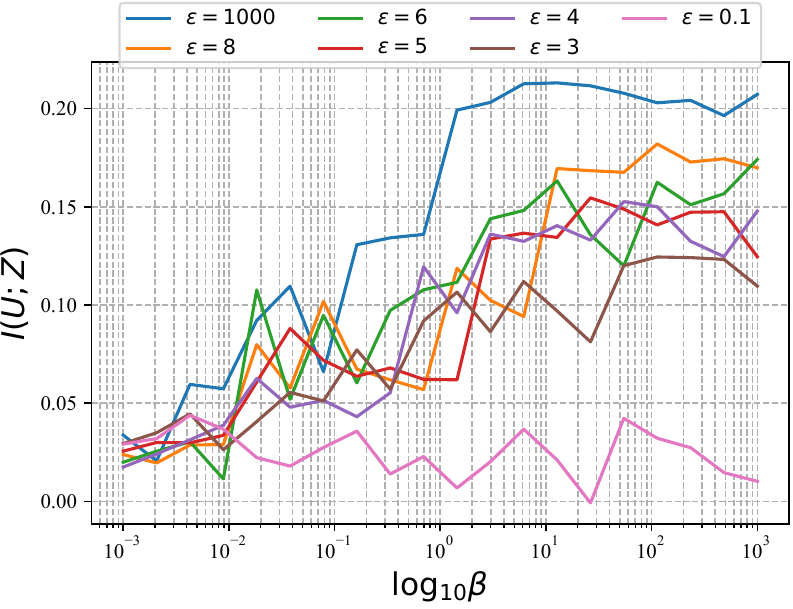}
				\caption{$I(U;Z)$}
			\end{subfigure}
   \begin{subfigure}{0.235\textwidth}
			\includegraphics[width=1.\textwidth]{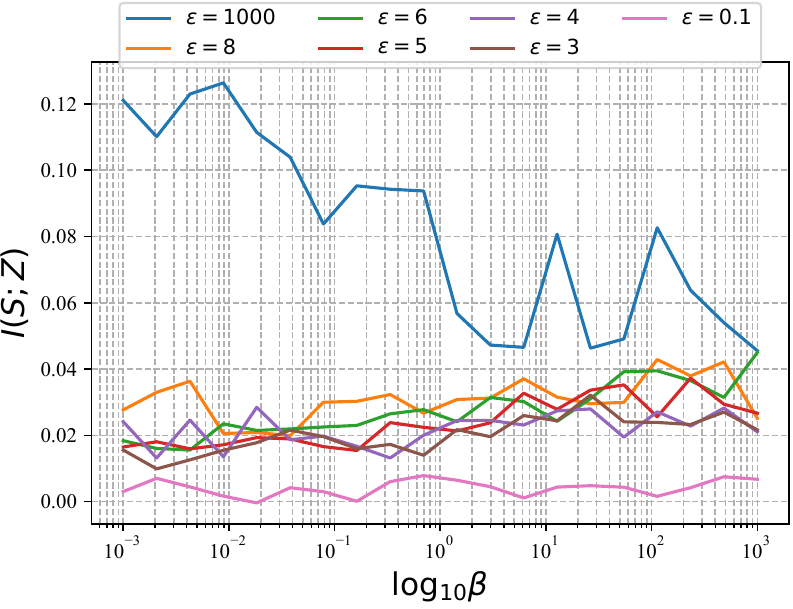}
				\caption{$I(S;Z)$}
			\end{subfigure}
\caption{ Utility $I(U;Z)$ and sensitive information leakage $I(S;Z)$ of continuous $Z$ with $\beta \in [10^{-3}, 10^3]$ on the test data of (top) \texttt{adult}, (middle) \texttt{compas}, and (bottom) \texttt{hsls} dataset.}
\label{fig: MI}
\end{figure}
\subsection{Real-World Datasets}
In this section, we validate our results on three real-world datasets, \texttt{adult}, \texttt{compas}, and \texttt{hsls}.
We learn the continuous and discrete 2-dimensional obfuscated representation $Z$ using optimization~\eqref{eq: MC} with the $\epsilon$-LDP Laplace mechanism and randomized response, which we denote as \emph{Con} and \emph{Dis}, respectively. 
We adopt a 1-hidden-layer neural network as the utility decoder and use its accuracy to measure utility.

\subsubsection{Utility-Leakage Tradeoff with LDP-Mechanisms}
Figure~\ref{fig:AD} depicts the tradeoff graphs representing the relationship between inference accuracy and the level of fairness ($\Delta_{\text{DP}}$) across a range of $\epsilon$ values spanning the interval $[0, 10^3]$, as well as varying $\beta$ values within the range $[0.1, 10^3]$. Each data point is obtained by systematically adjusting the values of $\beta$.

Firstly, it is evident that as the inference accuracy of the side decoder increases, the values of $\Delta_{\text{DP}}$ also rise, indicating the inherent tradeoff between utility and fairness in the context of fair representation learning. While the proposed methods allow for the attainment of diverse utility-fairness tradeoffs by adjusting the values of $\beta$, it is notable that the obfuscated representations $Z$ with smaller values of $\epsilon$ can readily achieve lower values of $\Delta_{\text{DP}}$ at the expense of accuracy. This observation confirms the existence of a utility-leakage tradeoff and underscores the limitations imposed on sensitive information leakage by LDP randomizers, as articulated in Theorem~\ref{th: main}.
Notably, when $\epsilon$ approaches 0, the accuracy becomes equivalent to the group ratio, resulting in negligible utility. 
Furthermore, it is worth noting that the discrete representation $Z$ is more effectively obfuscated by the randomized response mechanism compared to the continuous representation $Z$ with Laplace mechanisms. This disparity arises due to the inherent limitations on information capacity imposed by the cardinality of discrete representations, rendering them amenable to effective obfuscation. Additionally, the exhibited tradeoffs in Figure~\ref{fig:AD} may vary depending on the specific dataset, as the dependence between the sensitive attributes and the targets can differ. Nonetheless, the overall trend remains consistent.

To further investigate the utility-fairness tradeoffs influenced by $\beta$ and $\epsilon$, we present the utility measure $I(U;Z)$ and sensitive information leakage $I(S;Z)$ learned for varying $\beta$ values within the range $[10^{-3}, 10^3]$. Figure~\ref{fig: MI} shows the obtained results. 
The results demonstrate that increasing the value of $\beta$ leads to an increase in the utility $I(U;Z)$. However, the sensitive information leakage $I(S;Z)$ shows only minor changes when the hyperparameter $\beta$ varies. 
This observation suggests that the hyperparameter $\beta$ more effectively governs the utility of $Z$ compared to the sensitive information leakage. 
Furthermore, the introduction of $\epsilon$-LDP randomizers proves to be more effective in constraining sensitive information leakage. From a practical perspective, one can set up an LDP randomizer with a small value of $\epsilon$ to establish an upper constraint on $I(S; Z)$, and increase the value of $\beta$ to utilize IB-based optimization to suppress the sensitive information leakage.
These experimental results validate the necessity of incorporating LDP randomizers in the encoding process and provide empirical support for our theoretical findings that the LDP randomizers in fact facilitate the fairness of the learned representations on the real-world datasets.

\begin{table*}[t]
		\centering
 		\caption{Performance of evaluated methods on \texttt{adult}, \texttt{compas}, and \texttt{hsls} datasets. The values for the baselines are depicted as: without/with $\epsilon$-LDP Laplace mechanisms, and the values of $\epsilon$ are the same for each dataset. The last column is the accuracy of a random forest classifier which reconstructs the sensitive attribute from the observed representation.The best and second-best results are bolded and underlined, respectively. $^*$: The accuracy is equal to the group ratio, which means the learned representations provide no utility. Each experiment is replicated it 10 times with identical hyperparameters setting. }
		\label{tab: compared-exp}
		\resizebox{1.00\linewidth}{!}{
			\begin{tabular}{ccccccc}
				\toprule
                       Dataset & Methods & Accuracy & $I(S;Z)$ & $\Delta_{\text{DP}}$ &  $\Delta_{\text{EO}}$ & Accuracy ($S$)\\
				\midrule
    \multirow{8}{*}{\texttt{compas}}& Ours (Con, $\epsilon=5$) &  $0.64 \pm{0.01}$& $\mathbf{0.05 \pm{0.01}}$  & $0.15\pm{0.02}$ & $0.14\pm{0.02}$ & {$\underline{0.52\pm{0.01}}$}\\
    & Ours (Dis, $\epsilon=5$) & $0.63\pm{0.01}$&$\underline{0.06\pm0.02}$&$\underline{0.11\pm0.01}$ &$ \mathbf{0.12 \pm0.02}$&$\mathbf{0.51\pm0.01}$\\
    & CFAIR & 0.67 {$\pm{0.01}$}/0.62 {$\pm{0.02}$} &0.10 {$\pm{0.03}$}/0.11 {$\pm{0.04}$} & 0.31 {$\pm{0.03}$}/0.19 {$\pm{0.03}$}&0.32 {$\pm{0.06}$}/0.20 {$\pm{0.01}$} & {0.67$\pm{0.00}$}/ {0.56$\pm{0.00}$}\\
    & PPVAE & 0.64 {$\pm{0.01}$}/0.56 {$\pm{0.01}$}&0.24 {$\pm{0.08}$}/0.10 {$\pm{0.02}$} &0.25 {$\pm{0.02}$}/0.02 {$\pm{0.01}$} &0.33 {$\pm{0.00}$}/0.02 {$\pm{0.01}$}& {0.90$\pm{0.01}$}/ {0.55$\pm{0.02}$}\\
    & FFVAE & 0.54$^*$/0.54$^*$& - / -&- / - &- / -&- / -\\
    & CFB &0.65 {$\pm{0.01}$}/0.57 {$\pm{0.01}$} &0.15 {$\pm{0.03}$}/0.04 {$\pm{0.00}$} &0.22 {$\pm{0.01}$}/0.04 {$\pm{0.01}$} &0.24 {$\pm{0.01}$}/0.04 {$\pm{0.00}$} &  {0.59$\pm{0.02}$}/ {$0.51\pm{0.01}$} \\
    & VFAE&0.64 {$\pm{0.01}$}/0.56 {$\pm{0.01}$} &0.24 {$\pm{0.10}$}/0.11 {$\pm{0.01}$} &$\mathbf{0.09\pm0.03}$/0.07 {$\pm{0.01}$} &$\underline{0.13 \pm0.01}$/0.09 {$\pm{0.01}$}& {0.91$\pm{0.02}$}/ {$0.51\pm{0.00}$}\\
    & Raw data & 0.67 {$\pm{0.01}$}/0.55 {$\pm{0.00}$}& 0.30 {$\pm{0.02}$}/0.27 {$\pm{0.01}$}& 0.23 {$\pm{0.00}$}/0.04 {$\pm{0.02}$}&0.23 {$\pm{0.00}$}/0.04 {$\pm{0.02}$}&  {$0.69\pm{0.00}$}/ {$0.63\pm{0.00}$}\\
    \midrule
    \multirow{8}{*}{\texttt{adult}}& Ours (Con, $\epsilon=5$) & 0.82 {$\pm{0.00}$} & $\underline{0.03\pm0.00}$ & 0.12 {$\pm{0.01}$} & 0.06 {$\pm{0.01}$} & $\mathbf{0.61\pm0.00}$\\
    & Ours (Dis, $\epsilon=5$) & 0.78 {$\pm{0.00}$}& $\mathbf{0.02\pm0.01}$& $\mathbf{0.05\pm0.01}$& $\underline{0.03\pm0.01}$&$\mathbf{ 0.61\pm0.00}$\\
    & CFAIR & 0.80 {$\pm{0.01}$}/0.79 {$\pm{0.01}$}& 0.06 {$\pm{0.02}$}/0.04 {$\pm{0.01}$}& 0.24 {$\pm{0.01}$}/0.14 {$\pm{0.01}$}&0.06 {$\pm{0.03}$}/0.04 {$\pm{0.01}$} &  $\underline{0.63\pm0.01}$/ {0.62$\pm{0.00}$} \\
    & PPVAE & 0.79 {$\pm{0.00}$}/0.76$^*$&0.18 {$\pm{0.02}$}/ - &0.08 {$\pm{0.01}$}/ - &0.12 {$\pm{0.00}$}/ - &  {$0.66\pm{0.01}$}/ -\\
    & FFVAE & 0.76$^*$/0.76$^*$ &- / - &- / - &- / -&- / -\\
    & CFB &0.82 {$\pm{0.01}$}/0.77 {$\pm{0.00}$} &0.06/0.01 {$\pm{0.00}$} &0.11 {$\pm{0.02}$}/0.02 {$\pm{0.01}$} &$\mathbf{0.02\pm0.01}$/0.01 {$\pm{0.00}$}&  {$0.64\pm{0.02}$}/ {$0.61\pm{0.00}$}\\
    & VFAE&0.78 {$\pm{0.01}$}/0.76$^*$ &0.16 {$\pm{0.02}$}/ -  &$\underline{0.06\pm0.01}$/ -  &0.06 {$\pm{0.01}$}/ -  &  {$0.71\pm{0.01}$}/ -\\
    & Raw data &0.85 {$\pm{0.00}$}/0.76$^*$& 0.58 {$\pm{0.01}$}/ - & 0.18 {$\pm{0.00}$}/ - & 0.07 {$\pm{0.00}$}/ - & {$0.84\pm{0.00}$}/ - \\
    \midrule
    \multirow{8}{*}{\texttt{hsls}}& Ours (Con, $\epsilon=5$) & 0.68 {$\pm{0.01}$} & $\mathbf{0.01\pm0.00}$ & 0.12 {$\pm{0.01}$} & 0.09 {$\pm{0.01}$} &$\mathbf{0.53\pm0.01}$ \\
    & Ours (Dis, $\epsilon=5$) &0.67 {$\pm{0.02}$}& $\mathbf{0.01\pm 0.00}$ & ${0.09\pm 0.00}$ & 0.08 {$\pm{0.02}$}& $\underline{0.52\pm0.02}$\\
    & CFAIR & 0.51*/0.51* & -/- & -/-&-/- &  -/- \\
    & PPVAE & 0.62 {$\pm{0.02}$}/0.56{$\pm{0.03}$}&${0.17\pm0.06}$/ 0.48{$\pm{0.02}$} &$\underline{0.07\pm 0.01}$/ 0.05{$\pm{0.03}$} &$\mathbf{0.06\pm0.03}$/ 0.09{$\pm{0.02}$} &  {$0.81\pm{0.03}$}/ 0.58{$\pm{0.07}$}\\
    & FFVAE & 0.51$^*$/0.51$^*$ &- / - &- / - &- / -&- / -\\
    & CFB &0.61{$\pm{0.01}$}/0.59 {$\pm{0.01}$} &$\underline{0.03 \pm0.01}$ /0.03 {$\pm{0.01}$} &0.09 {$\pm{0.02}$}/0.07 {$\pm{0.02}$} &$\underline{0.07\pm0.01}$/0.06 {$\pm{0.03}$}&  {$0.54\pm{0.00}$}/ {$0.53\pm{0.02}$}\\
    & VFAE&0.61 {$\pm{0.02}$}/0.52{$\pm{0.01}$}$^*$ &0.33 {$\pm{0.02}$}/ 0.07{$\pm{0.02}$}  &$\mathbf{0.05\pm0.02}$/ 0.02{$\pm{0.01}$}  &$\mathbf{0.06 \pm0.01}$/ 0.02{$\pm{0.01}$}  &  0.75{$\pm{0.021}$}/ 0.53{$\pm{0.01}$}\\
    & Raw data &0.73 {$\pm{0.00}$}/0.51{$\pm{0.02}$}& 0.97 {$\pm{0.01}$}/0.97{$\pm{0.01}$}  & 0.15 {$\pm{0.00}$}/0.03{$\pm{0.02}$}  & 0.13 {$\pm{0.00}$}/ 0.04{$\pm{0.02}$} & {$0.91\pm{0.00}$}/ 0.60{$\pm{0.02}$} \\
				\bottomrule
	           \end{tabular}
            }
	\end{table*}
\subsubsection{Performance Comparison of Fair Representation Learning Methods}
We proceed to compare our methods with several representative baselines for fair representation learning, as outlined in Section~\ref{subsec:exp_set}. For the utility decoder and downstream classifier, we continue to employ a 1-hidden-layer neural network. To ensure a fair comparison, we tune the hyperparameters of all evaluated methods to achieve similar levels of accuracy. In addition to accuracy, we assess fairness metrics such as $I(S;Z)$, $\Delta_{\text{DP}}$, and $\Delta_{\text{EO}}$. Furthermore, we train an additional classifier based on the learned representations specifically to discern the sensitive attributes, denoted as \emph{Accuracy ($S$)}, which serves as an additional fair metric.
Given that the baselines do not incorporate LDP mechanisms, we directly apply Laplace mechanisms with $\epsilon=8$ to the obtained representations to examine the impact of LDP on conventional fair representation learning methods. 

As seen in Table~\ref{tab: compared-exp}, our methods not only ensure LDP but also present comparable fairness performance relative to the baselines with similar accuracy. Additionally, we discovered that FFVAE cannot provide any utility regardless of the hyperparameter tuning, which is primarily due to the restriction of the obfuscated representation to a low-dimensional space, $d=2$. Furthermore, we can observe that directly applying Laplace mechanisms to the baseline methods results in a significant decline in accuracy and an unstable reduction in fairness metrics. However, our proposed method can achieve both LDP and fairness with a sufficient accuracy requirement. This underscores the importance of a holistic design in the combination of LDP and fair representation learning, as separate implementation cannot achieve the desired effect.

\section{Discussions}
\label{sec: dis}
This work is driven by the recognition of the relationship between privacy and fairness, and seeks to harness their synergy to develop more effective and reliable algorithms. However, it is important to acknowledge the limitations of this study considering the overall objective.
\begin{itemize}
    \item Firstly, our theoretical results and the proposed methods are only focused on LDP and pre-processing methods for fairness in a bottleneck framework. While there exist other categories of fairness methods, both in-processing and post-processing, the extension of our approach to these methods and other privacy mechanisms remains an open question. 
    \item  Another limitation regards the proposed variational approach in Section~\ref{sec: variational}. Even though our method is developed without the restriction of variational priors, it still requires tractable density for the variational approximation, which limits our application scenarios, especially in the case of estimating MI in high-dimensional settings with finite samples. One future direction is to directly develop gradient estimators for implicit models~\cite{wen2020mutual, li2017gradient}.
    \item The side decoder of the proposed method appears redundant for obfuscated representation encoding, as the utility decoder can serve as an inference model for downstream tasks. One possible solution is to eliminate the side decoder by adopting contrastive learning techniques such as infoNCE~\cite{oord2018representation}, leave-one-out and contrastive log-ratio upper bounds~\cite{poole2019variational, cheng2020club}.
\end{itemize}

Despite these limitations, this research adds significant value to the existing body of knowledge on the interplay between privacy and fairness. It highlights the criticality of adopting a comprehensive approach in the development of trustworthy ML systems.

\section{Conclusion}
\label{sec: con}
In this work, we investigate the integration of differential privacy into algorithmic fairness, specifically focusing on fair representation learning with LDP. We employ a bottleneck framework to minimize the leakage of sensitive information and incorporate LDP randomization into the encoding process. 
Our theoretical findings have been validated through simulation results, confirming the effectiveness of our method in achieving LDP and fairness without compromising utility. This validation provides empirical evidence supporting the practical application of our approach. By demonstrating the successful integration of LDP and fairness while preserving utility, our research contributes to the broader understanding of the intricate relationship between privacy and fairness in ML.

{\footnotesize \bibliographystyle{acm}
\bibliography{sample}}

\appendix
\section{Appendix}
\subsection{Proofs of the variational encoding}\label{app: variational}
In this section, we proceed to present the proofs of the implementation in Section~\ref{sec: variational}. We present the detailed derivation of variational lower bounds of the proposed objective $\mathcal{L}(\bphi)$ and prove the proposed randomizers for intermediate representations, Laplace mechanisms and randomized response, satisfy $\epsilon$-LDP.

\subsubsection{Derivation of the variational lower bound}\label{app: variational-deriv}
The Lagrangian of the proposed optimization problem is $\mathcal{L}(\bphi) = I(X;Z|S) + \beta I(U;Z)$. We write it in full expression and derive the variational lower bound presented in \eqref{eq: variational}, as shown in the following.
\begin{figure*}[t]
\begin{equation}\label{eq:proof}
\scriptstyle
\begin{aligned}
    \cL (\phi)&=  I(X; Z|S) + \beta I(U; Z) \\
    & = H(X|S) - H(X|Z;S) + \beta [H(U)-H(U|Z)]\\
    & = \bE_{p(s, u, x)} \{ \bE_{p_{\bphi}(z|x)}[ \log p_{\bphi}(x|z, s) ] + \beta \bE_{p_{\bphi}(z|x)} [ \log p_{\bphi}(u|z)]\} + H(X|S) +\beta H(U)\\
    & = \underbrace{\bE_{p(s, u, x)} \{ \bE_{p_{\bphi}(z|x)}[ \log q_{\btheta}(x|z, s) ] + \beta \bE_{p_{\bphi}(z|x)} [ \log q_{\btheta}(u|z)]\}}_{\mathcal{L}_V(\bphi, \btheta)} + \underbrace{H(X|S) +\beta H(U)}_{\text{const}} \\
    &+\bE_{p(z, s)} \{ \underbrace{\bE_{p_{\bphi}(x|z, s)}[ \log \frac{p_{\bphi}(x|z, s)}{q_{\btheta}(x|z, s)} ]}_{KL(p_{\bphi}(x|z, s)\|q_{\btheta}(x|z, s))\geq 0\}} + \beta \bE_{p(z)} \{\underbrace{\bE_{p_{\bphi}(u|z)} [ \log \frac{p_{\bphi}(u|z)}{q_{\btheta}(u|z)} ]}_{KL(p_{\bphi}(u|z)\|q_{\btheta}(u|z))\geq 0} \} \\
    &\geq \mathcal{L}_V(\bphi, \btheta)
\end{aligned}
\end{equation}
\hrule
\end{figure*}
where the KL-divergence is non-negative and the terms $H(X|S)$ and $H(U)$ are constant given a data source $P_{USX}$. Therefore, the variational lower bound of $\mathcal{L}(\bphi)$ is $\mathcal{L}_V(\bphi,\btheta)$. 

\subsubsection{Continuous Representation with Laplace mechanism} \label{app: LM}
When $\hat{z}$ is $d$-dimensional continuous representation $\hat{z} = (\hat{z}_1, \hat{z}_2, \dots, \hat{z}_d)\in \bR^{d}$ that has been subjected to a local truncation with a threshold $t$ for each dimension, a Laplace mechanism is $\epsilon$-LDP by adding artificial Laplace noises,
\begin{align}
    z = \hat{z} + (n_1, n_2, \dots, n_d),
\end{align}
for each $\hat{z}$, $n_i \sim \text{Laplace}(0, 2td/\epsilon)$.
\begin{proof}
    Using the local truncation with a threshold $t$ on the intermediate representations, we can limit the sensitivity for each pair of representations $\hat{z}' \not = \hat{z} \in \bR^d$ by
    \begin{align}
        \|\hat{z}' -\hat{z}\|_1 \leq 2td.\label{eq: truncation}
    \end{align}
    Then, the probability ratio for each pair $\hat{z}' \not = \hat{z} \in \bR^d$ with Laplace noise $\text{Laplace}(0, 2td/\epsilon)$, 
    \begin{align}
        \frac{\Pr[Z=z|\hat{z}]}{\Pr[Z=z|\hat{z}']} & = \prod\limits_{i=1}^d \frac{\Pr[Z_i=z_i|\hat{z}_i]}{\Pr[Z_i=z_i|\hat{z}_i']}  \\
        & = \prod\limits_{i=1}^d \frac{\exp(-\frac{\epsilon|z_i-\hat{z}_i|}{2td})}{\exp(-\frac{\epsilon|z_i-\hat{z}'_i|}{2td})}\\
        & = \prod_{i=1}^d\exp(\frac{\epsilon|z_i - \hat{z}'_i|-\epsilon|z_i- \hat{z}_i|}{2td})      \\
        & \overset{(a)}{\leq} \prod_{i=1}^d\exp(\frac{\epsilon|\hat{z}_i-\hat{z}'_i|}{2td}) \\
        & = \exp(\frac{\epsilon\|\hat{z}' -\hat{z} \|_1}{2td})\\
        & \overset{(b)}{\leq} \exp(\epsilon),
    \end{align}
    where the inequality $(a)$ follows from the triangle inequality and $(b)$ follows from the result of local truncation \eqref{eq: truncation}. The result of $ \frac{\Pr[Z=z|\hat{z}]}{\Pr[Z=z|\hat{z}']} \geq \exp(-\epsilon)$ can be easily obtained by symmetry. 
\end{proof}

\subsubsection{Discrete representation with randomized response}\label{app: RR}
Given an encoder with the output representation $\hat{z} = (\hat{z}_1, \hat{z}_2, \dots, \hat{z}_d) \in [k]^d $, the $\epsilon$-LDP randomized response $\cM: \mathcal{\hat{Z}} \to \mathcal{Z}$ works as follow, for any $i \in [d]$ and $u\in [k]$, 
\begin{align}
    \Pr[z_i = v|u] = \left\{  
\begin{array}{lr}
    \frac{e^{\epsilon/d}}{e^{\epsilon/d}+k-1}, & \text{if } v = u,    \\
    \frac{1}{e^{\epsilon/d}+k-1} , &\text{if } v \not = u. 
\end{array}
\right. 
\end{align} 
\begin{proof}
For any $\hat{z}\not = \hat{z}' \in [k]^d$, the probability ratio is bounded by the worst case of released representation, $v \not =  u$ for each $i \in[d]$. Specifically, 
\begin{align}
    \log \frac{\Pr[Z = z|\hat{z}]}{\Pr[Z = z|\hat{z}']} & \overset{(a)}{\leq}  \sum\limits_{i=1}^{d} \log \frac{\Pr[Z_i=u| u]}{\Pr[Z_i=u|v]}\\
    & = d \log \frac{ \frac{e^{\epsilon/d}}{e^{\epsilon/d}+k-1}}{\frac{1}{e^{\epsilon/d}+k-1}}\\
    & = d \log e^{\epsilon/d}\\
    & = \epsilon,
\end{align}
where the inequality $(a)$ follows from $e^{\epsilon/d} \geq 1$ for any $\epsilon \geq 0$. The result of $\log \frac{\Pr[Z = \hat{z}|\hat{z}]}{\Pr[Z = z|\hat{z}]} \geq -\epsilon$ follows by symmetry.
\end{proof}
\subsubsection{Discrete representation encoding} \label{app: VQVAE}
We introduce a vector-quantization module to encode the discrete representations following the seminal work of VQ-VAE~\cite{van2017neural}. For a discrete representation $z \in [k]$, we define a set of embedding vectors $E \in \bR^{K\times D}$ and discretize the encoding features $\text{enc}(x) \in \bR^D$ using a nearest-neighbor lookup on $E$. Specifically, the discrete value $k$ is obtained by
\begin{align}
    k  = \argmin\limits_{j \in [K]} \| \text{enc}(x) - E_j\|_2,
\end{align}
And the decoders take the input of $z_q(x)$ as the $k$-th embedding, $z_q(x) = E_k$.    
Then, the model is trained to minimize the total loss function $\mathcal{L}_{\text{Dis}}$ that combines the proposed loss $\mathcal{L}_V$ and the vector-quantization terms. Thus, the total training objective for discrete encoding is:
\begin{align}
    \mathcal{L}_{\text{Dis}} & = \mathcal{L}_V + \|\text{sg}[\text{enc}(x)] - z_q(x) \|_2^2 + \lambda \|\text{enc}(x) - \text{sg}[z_q(x)] \|_2^2,
\end{align}
where sg is the stopgradient operator. We set $\lambda = 0.25$ for all the experiments, following the seminal work~\cite{van2017neural}. 
\subsection{Experiment Details}
\subsubsection{Neural Network Architectures}

\paragraph{Colored-MNIST experiment:} The neural network architectures of our method for the Colored-MNIST dataset are described as follows.

\begin{itemize}
    \item \textbf{Encoder:} The encoder is a convolutional neural network with three convolutional layers and ReLU activations.
    \item \textbf{Utility decoder:} The utility decoder adopts a multi-layer perceptron with two 100-unit hidden layers and ReLU activations. When the utility variable $U$ represents color, the output is 3-dimensional with a Sigmoid activation function. When the utility variable $U$ represents a digit number, the output is 10-dimensional with a Sigmoid activation function.
    \item \textbf{Side decoder:} The side decoder neural network is shown in Table~\ref{tab: networks}. Since the side decoder takes representation $Z$ and sensitive variable $S$ as input with different dimensions, it is essential to apply appropriate fusion operations. First, we perform one-hot encoding on the sensitive variable $S$ so that it has channels equal to the cardinality of $S$, $|S|$. Then the size in each channel is set to $10\times 10$ by duplicating the label 0 or 1. Therefore, the sensitive variable $S$ is represented by a tensor with the size of $|S|\times 10 \times 10$. Subsequently, the processed $S$ and $Z$ are applied with a convolutional layer separately after which $Z$ and $S$ can be concatenated along the channel dimension to go through the following layers.
\end{itemize}
   \begin{table}[t]
		\centering
     		\caption{The side decoder architecture for the experiments on \texttt{colored MNIST} dataset. Conv2D (cin, court, ksize, stride, padding) represents a 2D convolution layer, where cin is the number of input channels, cout is the number of output channels, ksize is the size of the kernel. BN, ReLU, ReLU6, and Sigmoid represent batch normalization, ReLU, ReLU6, and Sigmoid layers, respectively.}

		\label{tab: networks}

			\begin{tabular}{cl}
				\toprule
                Modules & Architecture\\
    \midrule
 \multirow{6}{*}{Side decoder} & Conv2d (3, 10, 5, 1, 2), BN, ReLU\\
 &Conv2d ($|S|$, 10, 5, 1, 2), BN, ReLU\\
 &Concatenate\\
 &Conv2d (20, 100, 5, 1, 2), BN, ReLU\\
 &Conv2d (100, 50, 5, 1, 2), BN, ReLU6\\
 &Conv2d (50, 3, 5, 1, 2), Sigmoid \\
				\bottomrule
		\end{tabular}
	\end{table}

\paragraph{Adult and compas Experiment:} Since the \texttt{Adult} dataset and \texttt{compas} dataset have similar datum form, the network architectures are almost the same except for the input dimension. The representation $Z$ is 2 dimensional with a truncation on each dimension with threshold $t=0.5$.
\begin{itemize}
    \item \textbf{Enocder:} The encoder adopts a multi-layer perceptron with a single hidden layer with 100 units and ReLU activations.
    \item \textbf{Utility decoder:} The utility decoder adopts multi-layer perceptron with two 100-units hidden layers and ReLU activations. Since the utility variable $U$ is binary, the output is 2-dimensional with a Sigmoid activation function.
    \textbf{Side decoder:} The side decoder is a multi-layer perceptron with a single hidden layer with 100 units and ReLU activations. As both the representation $Z$ and sensitive variable $S$ are low-dimensional vectors, a concatenation of $Z$ and $S$ is directly used as input to the network.
\end{itemize}
   \begin{table*}
		\centering
 		\caption{Hyperparameters used in the experiments}
		\label{tab: hyperparameters}
		\resizebox{.95\linewidth}{!}{
			\begin{tabular}{cccccc}
				\toprule
                       Dataset & Methods & Epoches & Batchsize & Learning rate & Lagrange multiplier $\beta$\\
				\midrule
    \texttt{colored MNIST}& Con & 150& 256  & $10^{-3}$ & $[10^{-3}, 10]$ (logarithmically spaced) \\
    \texttt{compas}& Con & 100& 64  & $10^{-3}$ & $[10^{-3}, 10^3]$ (logarithmically spaced) \\
    \texttt{compas}& Dis & 100& 64  & $10^{-3}$ & $[10^{-3}, 10^3]$ (logarithmically spaced) \\
    \texttt{adult}& Con & 150& 512  & $10^{-3}$ & $[10^{-3}, 10^3]$ (logarithmically spaced) \\
    \texttt{adult}& Dis & 150& 128  & $10^{-3}$ & $[10^{-3}, 10^3]$ (logarithmically spaced) \\

				\bottomrule
		\end{tabular}}
	\end{table*}

\subsubsection{Baseline methods}
\begin{itemize}
    \item CFAIR (Conditional fair representations): CFAIR is a fair representations learning method that can simultaneously mitigate two notions of disparity among different demographic subgroups in the classiﬁcation setting, based on two key components balanced error rate and conditional alignment of representations~\cite{zhao2019conditional}. 
    \item PPVAE (Privacy Preserving Variational Autoencoder): PPVAE is a variational approach that does not rely on adversarial training under the setting that the private variable are high-dimensional and continuous data. We follow the settings in ~\cite{rodriguez2021variational}, which expands the original method for the non-continuous data by assuming categorical and Bernoulli output distributions~\cite{nan2020variational}.
    \item FFVAE (Flexibly Fair VAE): FFVAE is a method for learning compact representations that are useful for reconstruction and prediction, but are also ﬂexibly fair which means they can be easily modiﬁed at test time to achieve subgroup demographic parity with respect to multiple sensitive attributes~\cite{creager2019flexibly}. 
    \item CFB (Conditional Fairness Bottleneck): CFB is a variational approach to learn fair representations that keep a prescribed level of the relevant information which is not shared by the sensitive data while minimizing the remaining information they keep~\cite{rodriguez2021variational}. 
    \item VFAE (Variational Fair Autoencoder): VFAE is a model based on deep variational autoencoders with priors that encourage independence between sensitive and utility variables, as well as subsequent tasks such as classification performed on the representations~\cite{louizos2015variational}.
\end{itemize}

\subsubsection{Evaluation Metrics}
\paragraph{MI estimation:} We estimate MI $I(U;Z)$ and $I(S;Z)$ using the mutual information neural estimator (MINE)~\cite{belghazi2018mutual}. The neural network adopts a 2-hidden-layer multi-layer perceptron with both 100 hidden units and ReLU6 activation functions. The network takes 50000 iterations to approximate the estimator with an exponential moving average rate of 0.01, and the result value is estimated by averaging over the last 100 iterations. The hyperparameters used to train the networks are displayed in Table~\ref{tab: hyperparameters}.

   \begin{table}
		\centering
 		\caption{Hyperparameters used in MINE computation}
		\label{tab: MINE}
		
			\begin{tabular}{cccc}
				\toprule
                       Dataset  & Iterations & Batchsize & Learning rate   \\
				\midrule
    \texttt{compas}& $5\times 10^{4}$& 463  & $10^{-3}$  \\
    \texttt{adult} & $5\times 10^{4}$& 1024  & $10^{-3}$ \\

				\bottomrule
		\end{tabular}
	\end{table}
 
\begin{definition}[Demographic Parity Gap~\cite{zemel2013learning}]
The demographic parity gap or discrimination of a predictor $\hat{U}$ with sensitive variable $S$ is defined as
    \begin{align}
        \Delta_{\mathrm{DP}}\triangleq{\mid\Pr[\hat{U}=1 | S=0]}-\Pr[\hat{U}=1 | S=1]\mid. \label{eq: dpg}  
    \end{align}
\end{definition}    
A predictor $\hat{U}$ satisfies \emph{demographic parity} if $\Delta_{\text{DP}} = 0$ holds. In other words, the ratio of positive outcomes of the target should be the same across different protected groups, which implies the prediction is independent of $S$.Therefore, it is also known as \emph{statistical parity} and has been widely used for fairness evaluation. Notably, demographic parity may reduce the utility when the desired prediction of the utility variable is significantly different among each sensitive group ~\cite{dwork2012fairness}.

\begin{definition}[Equalized Odds Gap~\cite{dwork2012fairness}]
The equalized odds gap of a predictor $\hat{U}$ with sensitive variable S and utility variable U is defined as:
    \begin{align}
        \Delta_{\mathrm{EO}}\triangleq\max\limits_{u \in \left\{0,1\right\}}&|\Pr[\hat{U}=1 | S=0,U=u]\\
        &-\Pr[\hat{U}=1 | S=1,U=u]|. \label{eq: eo}  
    \end{align}
\end{definition}  
A predictor $\hat{U}$ satisfies \emph{equalized odds} if $\Delta_{\text{EO}} = 0$ holds, which implies the prediction is independent of $S$ conditioning on $U$. In other words, the ratio of true positives outcomes and false positives outcomes should both be the same for different protected groups.Therefore, it is also known as \emph{positive rate parity}. Unlike demographic parity, equalized odds does not conflict with a predictor with perfect performance for utility
 since a completely accurate predictor satisfying $\hat{U} = U$ conforms with the definition of positive rate parity~\cite{hardt2016equality}.

\subsubsection{Evaluation procedure}
The evaluation procedures in the experiments are described as follows. 
\begin{enumerate}
\item Firstly, all the evaluated methods are trained on the training set until convergence, where the dimensions of representation $Z$ are uniformly set to $d=2$. 
\item Then, we adopt 1-hidden-layer fully-connected neural networks as the downstream inference models and train them on the representation obtained from each evaluated method to predict the utility $U$. For the proposed method, we take the utility decoder as the downstream inference model as it has the same architecture and is designed for the downstream task. Furthermore, we also train the downstream inference models on the representations obtained by the baselines with Laplace mechanisms to investigate the impact of LDP mechanisms on the baseline methods.
\item Finally, the trained inference models are used to perform the prediction of utility $U$ on the test set.
The task accuracy and fairness metrics are computed based on the prediction, and sensitive information leakage $I(U;Z)$ is estimated using MINE. 
\end{enumerate}
\end{document}